\documentclass{article} % For LaTeX2e
\usepackage{iftex}
\ifPDFTeX
  \usepackage{iclr2027_conference,times}
\else
  \usepackage{iclr2027_conference}
\fi

\usepackage{amsmath,amsfonts,bm}

\def\eqref#1{equation~\ref{#1}}
\def\1{\bm{1}}

\DeclareMathAlphabet{\mathsfit}{\encodingdefault}{\sfdefault}{m}{sl}
\SetMathAlphabet{\mathsfit}{bold}{\encodingdefault}{\sfdefault}{bx}{n}

\usepackage{hyperref}
\hypersetup{
  colorlinks=true,
  linkcolor=blue,
  citecolor=teal,
  urlcolor=blue
}
\usepackage{xurl}
\usepackage{amsmath,amssymb}
\usepackage{amsthm}
\usepackage{booktabs}
\usepackage{tabularx}
\usepackage{graphicx}
\usepackage{needspace}
\usepackage{placeins}
\usepackage{float}
\usepackage{algorithm}
\usepackage{algpseudocode}

\theoremstyle{plain}
\newtheorem{definition}{Definition}
\newtheorem{proposition}{Proposition}
\newtheorem{corollary}{Corollary}

\title{Make Sparse Rewards Count: Density-Aware Reward Aggregation for Multi-Reward RL}

\author{
\textbf{Tong Zheng}\textsuperscript{1}\thanks{Equal contribution.}
\quad
\textbf{Skylar Zhai}\textsuperscript{2}\footnotemark[1]
\quad
\textbf{Zhan Cheng}\textsuperscript{3}\footnotemark[1]
\quad
\textbf{TianMing Sha}\textsuperscript{4}
\\
\textbf{Youling Huang}\textsuperscript{5}
\quad
\textbf{Shuo Zhou}\textsuperscript{6}
\quad
\textbf{Shaotong Qi}\textsuperscript{7}
\quad
\textbf{Jingcheng Liang}\textsuperscript{2}
\\
\textbf{Xuwei Ding}\textsuperscript{3}
\quad
\textbf{Pengcheng Xu}\textsuperscript{8}\thanks{Corresponding author}
\\[0.7em]
\normalfont
\textsuperscript{1}University of Chinese Academy of Sciences
\quad
\textsuperscript{2}University of Minnesota Twin Cities
\\
\textsuperscript{3}University of Wisconsin--Madison
\quad
\textsuperscript{4}Stony Brook University
\\
\textsuperscript{5}Dalian University of Technology
\quad
\textsuperscript{6}Beijing Foreign Studies University
\\
\textsuperscript{7}Southeast University
\quad
\textsuperscript{8}Kuaishou Technology
}

\iclrfinalcopy % Uncomment for camera-ready version, but NOT for submission.
\begin{document}

\maketitle
\lhead{Preprint}
\begin{abstract}
Multi-reward reinforcement learning trains large language models to satisfy multiple behavioral objectives simultaneously. Reward-wise normalization, as used in GDPO, preserves reward-specific relative information within rollout groups, but different objectives can still exhibit uneven learning progress. We study this behavior through \emph{advantage energy}, the sum of a reward's squared advantages over a batch. Under idealized GDPO normalization, we show that this energy is proportional to \emph{active-group density}: the fraction of rollout groups in which the reward provides nonzero relative advantages. This reveals a residual batch-level signal imbalance and provides a basis for calibrating reward contributions. Based on this relation, we propose \textbf{Density-Aware Reward Aggregation (DARA)}. We derive an inverse-square-root density correction that gives greater weight to signals from less frequently active rewards. DARA computes its weights from each rollout batch, adapting to changes in reward activity throughout training without modifying the underlying policy optimization objective. Experiments on tool calling and mathematical reasoning show that DARA learns the targeted behaviors faster than GDPO, reaching high format compliance in up to 26\% fewer training steps on tool calling and near-saturated length compliance in up to 65\% fewer steps on mathematical reasoning, while remaining competitive in final performance.
Our code is available at
\href{https://github.com/zhaihaotian/DARA}
{\texttt{github.com/zhaihaotian/DARA}}.

\end{abstract}

\section{Introduction}
\suppressfloats[t]
%Large language models (LLMs) are increasingly expected to satisfy a broad range of task requirements and human preferences during post-training \citep{ouyang2022training, touvron2023llama, rafailov2023direct}. Reinforcement learning (RL) has become a key approachach for this purpose, enabling models to optimize directly against task-specific and preference-based feedback \citep{nakano2021webgpt,ouyang2022training, guo2025deepseek}. As these requirements become more diverse, however, a single reward is often insufficient to characterize desirable model behavior. Modern LLMs may need to simultaneously achieve high correctness, maintain efficiency and safety, and interact reliably with external tools \citep{aggarwal2025l1, dai2024safe, schick2023toolformer}. This has naturally motivated multi-reward RL, where different behavioral requirements are represented as separate reward signals and optimized jointly \citep{dai2024safe, gdpo}. Within this setting, Group Relative Policy Optimization (GRPO) and its variants have become a widely adopted foundation, as they estimate relative advantages directly from groups of sampled responses without requiring a separate requiring a separate value model \citep{grpo, yu2026dapo}.

Reinforcement learning (RL) is widely used in large language model (LLM) post-training to optimize behavior from task-specific and preference-based feedback \citep{ouyang2022training,guo2025deepseek}. Group Relative Policy Optimization (GRPO) and its variants estimate relative advantages from groups of sampled responses without a separate value model \citep{grpo,yu2026dapo}. As task requirements become more diverse, however, a single reward is often insufficient to characterize desirable model behavior. Models may need to produce correct answers, satisfy length and format constraints, maintain safety, and use external tools reliably \citep{aggarwal2025l1,dai2024safe,schick2023toolformer}. These requirements motivate extending GRPO to multi-reward settings, where different behavioral requirements are represented as separate reward signals and optimized jointly \citep{dai2024safe,gdpo}.

%Extending GRPO to multiple rewards fundamentally changes how the policy optimization update is constructed. Standard GRPO first collapses rewards into a scalar score and then performs group-wise normalization, which can make distinct reward profiles indistinguishable \citep{grpo}. GDPO addresses this issue by normalizing each reward dimension independently before aggregation, thereby preserving reward-specific relative information \citep{gdpo}. Building on this formulation, recent methods further adapt reward-wise contributions according to signal variability, learning-stage informativeness, or consistency among advantages \citep{dvao,saw,gd2po}. However, these approaches mainly determine how strongly a reward should contribute once it becomes informative. They largely overlook how often each reward provides an informative signal in the first place. In practice, dense rewards such as correctness may vary across most rollout groups, while sparse rewards such as formatting or tool-use success activate only occasionally. Although GDPO normalizes their active advantages to comparable scales, sparse rewards still participate in far fewer effective updates, leading to slower and more seed-sensitive learning. 

In multi-reward GRPO, aggregating rewards before group-wise normalization can erase distinctions between reward combinations \citep{grpo}. GDPO addresses this issue by normalizing each reward dimension independently before aggregation, preserving reward-specific relative information \citep{gdpo}. However its mathematical reasoning and tool-calling experiments still show uneven learning across objectives. GDPO links this imbalance to the model's tendency to prioritize easier objectives, and reports that modest weight adjustments do not reliably change this priority \citep{gdpo}. These observations raise two questions: \textbf{\emph{What optimization mechanism underlies these differences in learning progress? Can reward weights be derived from this mechanism rather than relying on repeated empirical tuning?}}

%To this end, we introduce \textbf{Density-Aware Reward Aggregation (DARA)}. Starting from the role of advantages as coefficients in policy updates, our analysis relates active-group density to \emph{advantage energy}. From this relation, we derive a density-based calibration that gives the available comparisons from sparsely active rewards greater weight in the joint update. As shown in Figure~\ref{fig:tool_format_density}, DARA brings forward the phase of rapid format acquisition and reaches high format compliance in fewer training steps. Our empirical results on tool-calling and mathematical reasoning show favorable task performance alongside faster and more stable acquisition of the targeted behaviors.

We examine these questions through \emph{advantage energy}, the sum of each reward's squared advantages over a batch. Under idealized GDPO normalization, every active group contributes the same energy, making the batch total proportional to \emph{active-group density}, the fraction of rollout groups providing nonzero relative advantages for that reward. This relation offers a signal-based explanation for uneven learning and a basis for deriving reward weights. Figure~\ref{fig:tool_format_density} provides empirical support: higher active-group density accompanies faster improvement in format reward, and our calibration brings both the density peak and the rapid-learning phase forward.

Based on this analysis, we introduce \textbf{Density-Aware Reward Aggregation (DARA)}, matching each reward's energy to that of the highest-density reward. Less frequently active rewards receive larger weights when they do provide useful comparisons. DARA computes weights per rollout batch, adapting to changes in reward activity during training. Our main contributions are:
\begin{itemize}
     \item \textbf{A quantitative analysis of reward contributions.}
     We establish the relationship between active-group density and advantage energy, identifying a source of batch-level signal imbalance that remains after reward-wise normalization.

     \item \textbf{A density-based aggregation method.}
     We derive an inverse-square-root density calibration rule and develop DARA, which adapts advantage weights to reward activity during training and strengthens the relative signals supplied by infrequently active rewards.

     \item \textbf{An empirical evaluation of learning speed and stability.}
     Experiments on tool calling and mathematical reasoning show that DARA reaches high format compliance and near-saturated length compliance, respectively, in up to 26\% and 65\% fewer training steps than GDPO, while remaining competitive in final performance.
 \end{itemize}

\begin{figure*}[t]
    \vspace{-15pt}
    \centering
    \includegraphics[width=\textwidth]
    {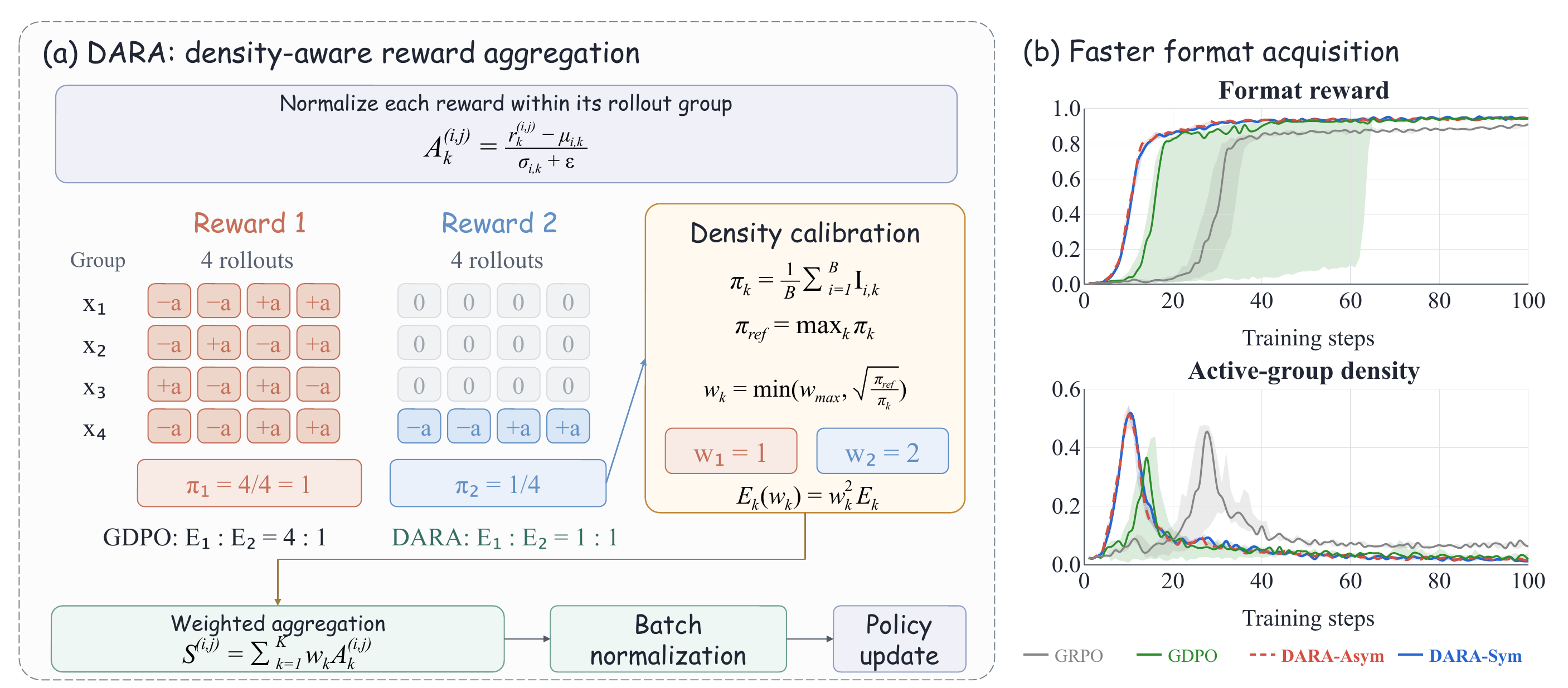}
    \vspace{-15pt}
    \caption{(a) DARA performs reward-wise group normalization, estimates active-group density, and applies density-based calibration before reward aggregation and batch-wise normalization. (b) Median format reward and active-group density, with shaded regions indicating the interquartile range. DARA reaches high format compliance in fewer training steps.}
    \label{fig:tool_format_density}
    \vspace{-8pt}
\end{figure*}

\FloatBarrier
\section{Related Work}

\paragraph{Reinforcement Learning for Large Language Models.}
Reinforcement learning has been widely used in LLM post-training to improve reasoning.
GRPO~\citep{grpo} estimates advantages through within-group reward comparisons without requiring a separate value model.
GSPO~\citep{gspo} introduces sequence-level importance ratios, while DAPO~\citep{yu2026dapo} improves training stability and efficiency and Dr.~GRPO~\citep{drgrpo} corrects length-related optimization bias.
For task-specific training, Search-R1~\citep{searchr1} enables search-augmented reasoning, while GiGPO~\citep{gigpo} improves credit assignment in long-horizon agent training.
Our method operates at the level of multi-reward advantage aggregation and is therefore orthogonal to both policy-update mechanisms and domain-specific task reward design.

\paragraph{Multi-Reward Reinforcement Learning.}
GDPO~\citep{gdpo} mitigates reward collapse in GRPO, where distinct reward combinations are mapped to identical advantages, through reward-wise group normalization.
A line of concurrent work builds on this formulation: DVAO~\citep{dvao} weights each reward by its within-group standard deviation, SAW~\citep{saw} by its batch-level coefficient of variation, and SA-MRPO~\citep{samrpo} by its remaining distance to the reward maximum.
GD$^2$PO~\citep{gd2po} filters rollouts whose reward-wise advantages have conflicting signs and reweights queries accordingly, while RVPO~\citep{rvpo} employs SoftMin aggregation to emphasize low-scoring reward dimensions.
SMOPD~\citep{smopd} combines multi-reward reinforcement learning with on-policy distillation by training and merging reward-specialized teachers.
In contrast, our aggregation is grounded in the relationship between active-group density and advantage energy, which identifies the signal imbalance left by reward-wise normalization and yields weights that equalize advantage energy across rewards.
Appendix~\ref{app:method_comparison} gives a detailed comparison of these methods.

% 如果更想强调理论是核心的话：In contrast, our adaptive reward aggregation is based on the theoretical relationship between active-group density and advantage energy, and adjusts dynamically throughout training.

\FloatBarrier
\section{Method}
\label{sec:method}

% DARA starts from a simple observation about multi-reward RL optimization.
% GDPO normalizes each reward separately within a rollout group, so different
% reward dimensions have comparable scales whenever they provide a useful
% relative comparison. However, rewards need not provide such comparisons
% equally often. Consider a correctness reward that varies within approximately $80\%$ of rollout groups and a format reward that
% varies within only $20\%$. Although their advantages have comparable
% scales within active groups, correctness contributes to roughly four
% times as many groups. In this section, we will first show that, under GDPO\citep{gdpo} normalization,
% the total advantage signal contributed by a reward is proportional to the
% fraction of groups in which that reward is active. This immediately leads to
% a density-based calibration rule.

We investigate the optimization mechanism behind uneven learning across objectives by examining reward-wise advantage signals at the batch level. We first show that, under idealized GDPO normalization, each reward’s advantage energy is proportional to its active-group density. We then derive a density-based weighting rule from this relation and use it to construct DARA.

%\begin{figure*}[t]
%    \vspace{-20pt}
%    \centering
%    \includegraphics[width=\textwidth,
%    trim=0 0.5cm 0 0,
%    clip]{figures/dara_batch_energy.pdf}
%    \vspace{-8pt}
%    \caption{Density calibration corrects the advantage-energy imbalance
%    accumulated across a batch.}
%    \label{fig:energy}
%    \vspace{-8pt}
%\end{figure*}

\subsection{Preliminaries}

Consider a batch $\mathcal{B}=\{x_i\}_{i=1}^{B}$ of $B$ prompts.
For each prompt $x_i$, the rollout policy samples $G$ responses
$\{y^{(i,j)}\}_{j=1}^{G}$.
Each response receives $K$ sequence-level rewards
$\{r_k^{(i,j)}\}_{k=1}^{K}$.
We use $\mu_{i,k}$ and $\sigma_{i,k}$ to denote the group mean and sample
standard deviation of reward $k$, with $\epsilon>0$. GRPO combines these rewards into a scalar reward and normalizes
the scalar within each rollout group:
\begin{equation}
r_{\mathrm{sum}}^{(i,j)}=
\sum_{k=1}^{K}r_k^{(i,j)},
\qquad
A_{\mathrm{GRPO}}^{(i,j)}=
\frac{
r_{\mathrm{sum}}^{(i,j)}-\mu_{i,\mathrm{sum}}
}{
\sigma_{i,\mathrm{sum}}+\epsilon
}.
\label{eq:grpo}
\end{equation}

GDPO reverses this order. It normalizes each reward dimension
independently,
\(
A_k^{(i,j)}=
\frac{
r_k^{(i,j)}-\mu_{i,k}
}{
\sigma_{i,k}+\epsilon
},
\)
and then aggregates the normalized advantages:
\begin{equation}
S_{\mathrm{GDPO}}^{(i,j)}=
\sum_{k=1}^{K}A_k^{(i,j)},
\qquad
\widehat A_{\mathrm{GDPO}}^{(i,j)}=
\frac{
S_{\mathrm{GDPO}}^{(i,j)}
-\mu_{\mathcal B}(S_{\mathrm{GDPO}})
}{
\sigma_{\mathcal B}(S_{\mathrm{GDPO}})+\epsilon
}.
\label{eq:gdpo}
\end{equation}
This reward-wise normalization preserves contrasts from individual reward
dimensions that may disappear after early scalarization. It also ensures
that active reward dimensions contribute at comparable scales within a
group. However, from the batch perspective, different rewards may contribute to different fractions of rollout groups depending on how often they are active. We next quantify how this difference translates into their overall contribution to the policy update.

\subsection{From Active Groups to Advantage Energy}
\label{sec:density_energy}
A reward provides a group-relative learning signal only when its values differ
across responses in the rollout group. We therefore call reward $k$ \emph{active} in group $i$ when it induces a
non-zero group-relative advantage. Active-group density is the fraction of rollout groups in which this
occurs
\begin{equation}
I_{i,k}=
\mathbf{1}\left[
\sum_{j=1}^{G}
\left|A_k^{(i,j)}\right|>\tau
\right],\qquad \pi_k=
\frac{1}{B}
\sum_{i=1}^{B} I_{i,k}.
\label{eq:group_activity}
\end{equation}

Thus, $B\pi_k$ counts how many groups provide reward $k$ to influence
the policy update. For a binary reward with rollout success probability $p_k$, a group is active
whenever it contains both successful and unsuccessful responses. Differences in per-rollout success rates can therefore translate into much larger differences in active-group density. This reveals why density can vary substantially across rewards.
A very difficult reward produces mostly all-failure groups, while a nearly
saturated reward produces mostly all-success groups. Rewards in an intermediate regime are active much more often. We now quantify how this difference affects optimization.

% \newpage
\begin{definition}[Advantage Energy]
\label{def:advantage_energy}
The advantage energy of reward $k$ over a batch is
\begin{equation}
E_k
\triangleq
\sum_{i=1}^{B}
\sum_{j=1}^{G}
\left(A_k^{(i,j)}\right)^2.
\label{eq:advantage_energy}
\end{equation}
\end{definition}

$E_k$ measures how much reward-wise advantage coefficient mass is available
to drive the policy update. To see this connection, let $g^{(i,j)}$ denote the
policy score of response $y^{(i,j)}$. The contribution associated with reward
$k$ takes the form
\(
u_k^{(i,j)}=
A_k^{(i,j)}g^{(i,j)},
\) which motivates
using $E_k$ to compare the reward-wise signal supplied by different reward
dimensions. The key property of idealized GDPO normalization is that every active
group contributes the same amount of advantage energy. Specifically,
consider $\epsilon=\tau=0$, assign zero advantages to constant-reward
groups, and use the sample standard deviation with divisor $G-1$.
Each active group then contributes $G-1$, giving
\begin{equation}
E_k=B\pi_k(G-1).
\label{eq:density_energy}
\end{equation}
This identity exposes the residual imbalance left by reward-wise
normalization. GDPO equalizes the contribution of each active group under
these assumptions, while the total energy accumulated across the batch
still scales linearly with active-group density. Appendix~\ref{app:energy_gradient} shows that, under an explicit covariance condition, this energy also governs the fluctuation of each reward's policy-gradient contribution. Figure~\ref{fig:bfcl_v4_checkpoint_dynamics} illustrates uneven learning across objectives: tool-calling accuracy is acquired earlier than format compliance. Figure~\ref{fig:tool_format_density} further shows the relationship between active-group density and the learning dynamics of the format reward. Its rapid increase coincides with a density peak, and density declines as the reward approaches its plateau. Compared with GRPO and GDPO, DARA brings both the density peak and the rapid-learning phase forward. Together with Equation~\ref{eq:density_energy}, these observations motivate calibrating reward-wise advantage signals.

\subsection{Density-Aware Reward Aggregation}
\label{sec:dara}
Equation~\ref{eq:density_energy} suggests a direct correction.
Suppose reward $k$ is multiplied by a channel weight $\alpha_k$ before
aggregation. Its advantage energy becomes
$E_k(\alpha_k)=\alpha_k^2 E_k$.
Let $\pi_{\mathrm{ref}}=\max_{1\leq k\leq K}\pi_k$ denote the largest
active-group density in the batch. For $\pi_k>0$, matching the reference
energy gives $\alpha_k^\star=\sqrt{\pi_{\mathrm{ref}}/\pi_k}$.
Under the assumptions of Equation~\ref{eq:density_energy}, this yields
$E_k(\alpha_k^\star)=B\pi_{\mathrm{ref}}(G-1)$, independently of the
original density $\pi_k$.
This gives the central rule behind DARA: rewards that become active less
frequently receive a larger coefficient when they do provide a useful
comparison. We use a capped correction in practice:
\begin{equation}
w_k=
\begin{cases}
\displaystyle
\min\!\left(w_{\max},\sqrt{\frac{\pi_{\mathrm{ref}}}{\pi_k}}\right),
& \pi_k>0,\\[4pt]
1, & \pi_k=0,
\end{cases}
\label{eq:density_weight}
\end{equation}
where $w_{\max}\geq 1$ limits amplification when a reward is active in very
few groups. A reward with zero measured density receives no amplification.
The cap can prevent exact energy matching when the required correction
exceeds $w_{\max}$. Applying $w_k$ to both positive and negative advantages gives
\textbf{DARA-Sym}, the direct implementation of the energy calibration
above. 

However, increasing both signs can also strengthen negative
coefficients when reward dimensions conflict. For example, in a rollout
group containing both length-compliant and overlong responses, a correct
but overlong response can receive a negative length advantage. Symmetric
calibration amplifies this negative coefficient with the positive
coefficients of compliant responses. To reinforce favorable relative comparisons, we introduce
\textbf{DARA-Asym}. It uses the same density weights but applies the
additional amplification only to positive advantages. Both variants can
be written as
\begin{equation}
\widetilde A_{k,v}^{(i,j)}=
\begin{cases}
w_k A_k^{(i,j)}, & v=\mathrm{sym},\\[3pt]
A_k^{(i,j)}+(w_k-1)\bigl[A_k^{(i,j)}\bigr]_+,
& v=\mathrm{asym},
\end{cases}
\label{eq:dara_variants}
\end{equation}
where $[a]_+=\max(a,0)$.
DARA-Asym retains negative coefficients at their original GDPO scale
before aggregation, while strengthening positive coefficients from
infrequently active rewards. This choice limits the additional negative
pressure introduced by density calibration. Its motivation follows from
the density imbalance, while the exact energy-matching result applies to
uncapped DARA-Sym under the stated assumptions.
We use DARA-Asym as our main method and retain DARA-Sym to evaluate the
effect of symmetric calibration. Finally, for each rollout batch, we compute the reward-wise advantages,
estimate their active-group densities, and obtain the weights.
\begin{equation}
S_v^{(i,j)}=
\sum_{k=1}^{K}\widetilde A_{k,v}^{(i,j)},
\qquad
\widehat A_v^{(i,j)}=
\frac{
S_v^{(i,j)}-\mu_{\mathcal B}(S_v)
}{
\sigma_{\mathcal B}(S_v)+\epsilon
},
\quad v\in\{\mathrm{sym},\mathrm{asym}\}.
\label{eq:dara}
\end{equation}
We then apply the selected calibration in
Equation~\ref{eq:dara_variants}, aggregate the calibrated advantages,
and normalize them over the batch. The resulting sequence-level advantage is assigned to the generated tokens
and used in the same clipped policy optimization objective as GDPO.

\FloatBarrier
\section{Experiments}

\subsection{Tool-Calling}

\paragraph{Task and training.}
Following ToolRL \citep{toolrl} and GDPO \citep{gdpo}, we train models to select
tools and generate their arguments from a user query and tool descriptions.
We use the training data provided by ToolRL, with
3,920 training examples and 80 held-out validation examples. Responses follow
the prescribed structure using \texttt{<think>}, \texttt{<tool\_call>}, and
\texttt{<response>} blocks as appropriate. Training uses two rewards: a binary
format reward $\{0,1\}$ checks the required structure, while a correctness reward in
$[-3,3]$ assigns credit for matching tool names, parameter names, and parameter
values against the reference calls. We compare both DARA variants, DARA-Asym and DARA-Sym (Section~\ref{sec:dara}), with GRPO~\citep{grpo}, GDPO~\citep{gdpo}, and the concurrent methods DVAO~\citep{dvao} and GD$^2$PO-Hard~\citep{gd2po}, which we reimplement under the same training configuration, using
Qwen2.5-1.5B-Instruct and Qwen2.5-3B-Instruct. Full optimization and rollout settings are given in
Appendix~\ref{app:tool_calling_details}.

\paragraph{Evaluation.}
We evaluate the final checkpoints on BFCL-v4 \citep{bfcl}, covering Live,
Non-Live, and Multi-Turn tool-calling tasks. Accuracy measures whether the
model generates the correct functions and arguments, while Format measures
adherence to the required output structure.

\begin{table}[H]
\vspace{-10pt}
\caption{BFCL-v4 performance at step 100. Acc.\ denotes tool-calling accuracy
and Format denotes output-format compliance. Average is the mean over
Live, Non-Live, and Multi-Turn.}
\label{tab:bfcl_final}
\centering
\fontsize{8}{9.2}\selectfont
\setlength{\tabcolsep}{3pt}
\renewcommand{\arraystretch}{1.10}
\begin{tabular*}{\linewidth}{@{\extracolsep{\fill}}cl*{8}{c}@{}}
\toprule
& & \multicolumn{2}{c}{Live}
& \multicolumn{2}{c}{Non-Live}
& \multicolumn{2}{c}{Multi-Turn}
& \multicolumn{2}{c}{Average} \\
\cmidrule(lr){3-4}
\cmidrule(lr){5-6}
\cmidrule(lr){7-8}
\cmidrule(l){9-10}
Size & Method
& Acc. & Format
& Acc. & Format
& Acc. & Format
& Acc. & Format \\
\midrule
1.5B & Base
& 9.33 & 0.37
& 12.06 & 0.12
& 0.38 & 0.21
& 7.25 & 0.23 \\

& GRPO
& 64.80 & 99.88
& 78.04 & 99.91
& 2.55 & 43.24
& 48.46 & 81.01 \\

& GDPO
& 67.05 & 99.94
& 79.30 & 99.97
& 4.75 & 91.40
& 50.37 & 97.11 \\

& DVAO
& 64.99 & 99.75
& 76.84 & 99.62
& 4.08 & 81.19
& 48.64 & 93.52 \\

& GD$^2$PO-Hard
& 67.06 & \textbf{99.97}
& 78.96 & \textbf{100.00}
& 4.38 & 86.59
& 50.13 & 95.52 \\
\midrule
& DARA-Asym
& 68.10 & 99.67
& 79.18 & 99.87
& 4.50 & \textbf{94.17}
& 50.59 & \textbf{97.90} \\

& DARA-Sym
& \textbf{68.42} & 99.91
& \textbf{80.25} & 99.97
& \textbf{4.85} & 92.77
& \textbf{51.17} & 97.55 \\
\midrule
3B & Base
& 47.67 & 54.92
& 36.23 & 34.23
& 1.75 & 9.16
& 28.55 & 32.77 \\

& GRPO
& 69.31 & 99.78
& 81.86 & 99.84
& 6.53 & 66.66
& 52.57 & 88.76 \\

& GDPO
& 69.30 & 99.93
& 82.60 & \textbf{100.00}
& 9.07 & 89.79
& 53.66 & 96.57 \\

& DVAO
& 70.00 & 99.90
& 83.03 & 99.99
& \textbf{9.62} & 91.21
& \textbf{54.22} & 97.04 \\

& GD$^2$PO-Hard
& 69.45 & 99.73
& \textbf{83.11} & \textbf{100.00}
& 9.12 & 91.28
& 53.90 & 97.00 \\
\midrule
& DARA-Asym
& 69.50 & 99.94
& 82.23 & 99.98
& 9.16 & 89.57
& 53.63 & 96.50 \\

& DARA-Sym
& \textbf{70.39} & \textbf{99.97}
& 83.05 & \textbf{100.00}
& 9.00 & \textbf{91.69}
& 54.15 & \textbf{97.22} \\
\bottomrule
\end{tabular*}
\vspace{-10pt}
\end{table}

\begin{figure}[t]
    \vspace{-15pt}
    \centering
    \includegraphics[width=\textwidth,
    trim=0 0.28cm 0 0.38cm,
    clip]{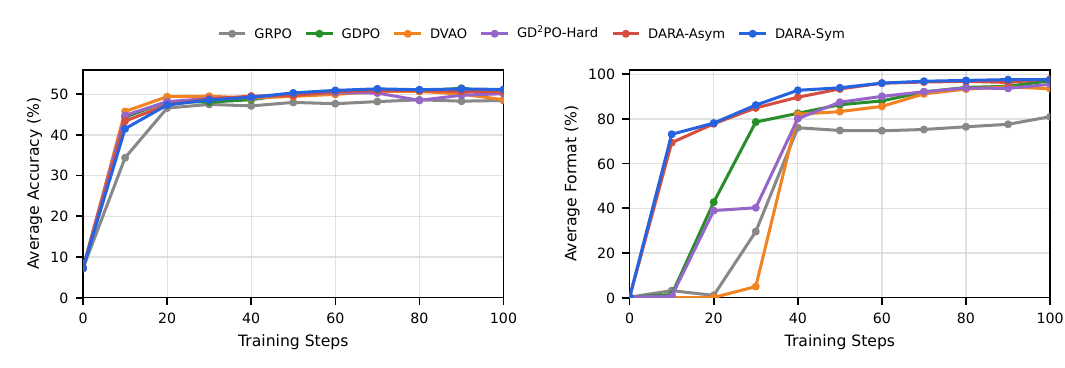}
    \vspace{-14pt}
    \caption{Checkpoint-wise BFCL-v4 evaluation of Qwen2.5-1.5B-Instruct. Left: Average Accuracy; right: Average Format, each averaged over Live, Non-Live, and Multi-Turn.}
    \label{fig:bfcl_v4_checkpoint_dynamics}
    \vspace{-10pt}
\end{figure}

\paragraph{Faster convergence and downstream generalization.}
Figure~\ref{fig:tool_format_density} shows that both DARA variants acquire the
format objective substantially faster, with the median format reward reaching
0.8 at steps 14--15 versus 19 for GDPO and 34 for GRPO. This faster
convergence transfers to downstream tool-calling performance
(Figure~\ref{fig:bfcl_v4_checkpoint_dynamics}). At step 60, DARA-Sym and
DARA-Asym achieve the highest Average Accuracy (50.94\% and 50.69\%) and
Average Format (96.06\% and 96.02\%) among all compared methods, whereas the
baselines reach at most 50.50\% Average Accuracy and 90.11\% Average Format.
Table~\ref{tab:bfcl_final} reports the final checkpoints at step 100, when most
methods have largely converged. DARA remains competitive at this stage: both
variants improve Average Accuracy and Average Format over GRPO and GDPO on the
1.5B model. On the 3B model, DARA-Sym achieves the highest Average Format and
near-best Average Accuracy, while DARA-Asym performs on par with GDPO.
DARA-Asym converges more consistently across seeds, whereas DARA-Sym attains
higher final Average Accuracy at both scales.

%\FloatBarrier

\paragraph{Convergence aligns with active-group density.}
Our analysis yields a testable prediction: for a binary reward with success probability $p$, a rollout group is active with probability $1-p^G-(1-p)^G$, which grows with the group size $G$ for any $p$, so a larger $G$ should raise active-group density and accelerate learning, even at a fixed response budget (Appendix~\ref{app:heterogeneous_density}). We vary $G\in\{4,8,16,32\}$ with 2,048 responses per step (Figure~\ref{fig:group_size_format_convergence}), and the results match this prediction: larger $G$ raises the format active-group density for all methods, and the format reward reaches $0.8$ earlier. GDPO learns the format objective slowly and inconsistently at small $G$ but reliably at $G=16$ and $32$. DARA also benefits from larger $G$, yet reaches $0.8$ earlier than GRPO and GDPO at every group size, as its density calibration compensates for sparse reward activity.

\begin{figure}[t]
    \centering
    \includegraphics[width=\textwidth]{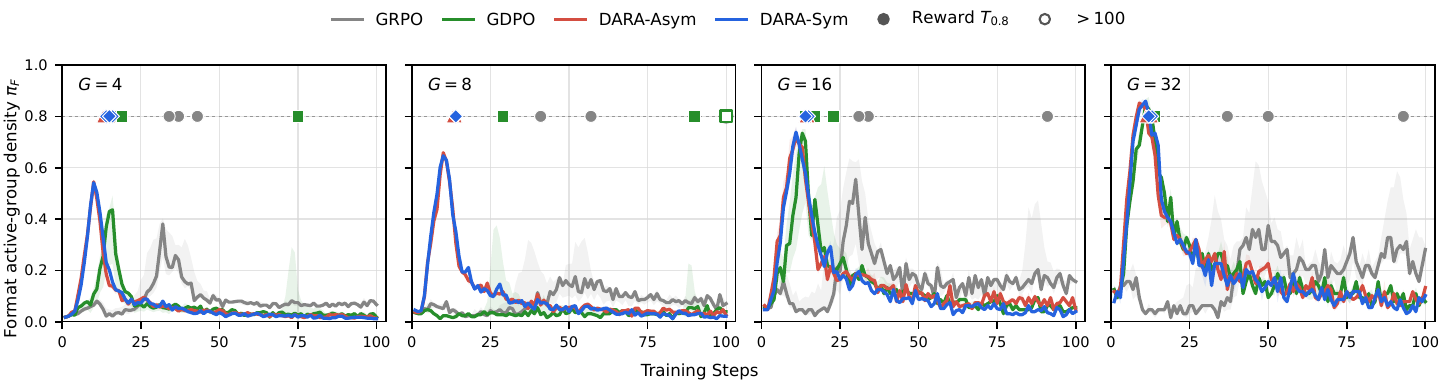}
    \vspace{-14pt}
    \caption{Format active-group density across rollout-group sizes.
    Curves show the median and shaded regions the interquartile range.
    Markers on the dashed line show when format reward first reaches 0.8;
    open markers indicate that it does not reach 0.8 within 100 steps.}
    \label{fig:group_size_format_convergence}
    \vspace{-10pt}
\end{figure}

\paragraph{Extension to three rewards.}
We further add a third reward that limits the reasoning length: it equals one when the \texttt{<think>} block contains at most 16
words and zero otherwise. This setting tests whether a new objective competes
with existing rewards for learning signal. At the final checkpoints, adding the length reward to GDPO
lowers Average Format from 97.11\% (Table~\ref{tab:bfcl_final}) to 94.39\%
(Table~\ref{tab:tool_three_rewards}), and Multi-Turn format from 91.40\% to
83.17\%. DARA largely avoids this interference: Average Format changes from
97.90\% to 96.93\% for DARA-Asym and from 97.55\% to 97.43\% for DARA-Sym,
while both variants achieve higher Average Len.\ than GDPO and Average
Accuracy is comparable across methods. By calibrating each reward
by its own active-group density, DARA incorporates the new objective
without crowding out the learning signal of existing rewards.

\begin{table}[b]
\vspace{-12pt}
\caption{BFCL-v4 performance at step 100 of Qwen2.5-1.5B-Instruct trained with
correctness, format, and length rewards. Len.\ denotes compliance; the other columns follow Table~\ref{tab:bfcl_final}.}
\label{tab:tool_three_rewards}
\centering
\fontsize{8}{9.2}\selectfont
\setlength{\tabcolsep}{1.6pt}
\renewcommand{\arraystretch}{1.00}
\begin{tabular*}{\linewidth}{@{\extracolsep{\fill}}cl*{12}{c}@{}}
\toprule
& & \multicolumn{3}{c}{Live}
& \multicolumn{3}{c}{Non-Live}
& \multicolumn{3}{c}{Multi-Turn}
& \multicolumn{3}{c}{Average} \\
\cmidrule(lr){3-5}
\cmidrule(lr){6-8}
\cmidrule(lr){9-11}
\cmidrule(l){12-14}
Size & Method
& Acc. & Format & Len.
& Acc. & Format & Len.
& Acc. & Format & Len.
& Acc. & Format & Len. \\
\midrule
1.5B & Base
& 9.33 & 0.37 & 1.33
& 12.06 & 0.12 & 0.73
& 0.38 & 0.21 & 3.58
& 7.25 & 0.23 & 1.88 \\

& GRPO
& 67.80 & \textbf{100.00} & 86.34
& 78.26 & 99.96 & 78.62
& \textbf{4.06} & 64.14 & 72.59
& 50.04 & 88.03 & 79.19 \\

& GDPO
& 66.62 & \textbf{100.00} & \textbf{100.00}
& \textbf{80.94} & \textbf{100.00} & 99.93
& 3.81 & 83.17 & 93.97
& 50.46 & 94.39 & 97.96 \\
\midrule
& DARA-Asym
& \textbf{68.02} & \textbf{100.00} & 99.41
& 80.00 & \textbf{100.00} & \textbf{100.00}
& 3.88 & 90.80 & 94.74
& \textbf{50.63} & 96.93 & 98.05 \\

& DARA-Sym
& 67.95 & \textbf{100.00} & \textbf{100.00}
& 80.21 & \textbf{100.00} & 99.99
& 3.31 & \textbf{92.30} & \textbf{96.16}
& 50.49 & \textbf{97.43} & \textbf{98.72} \\
\bottomrule
\end{tabular*}
\end{table}
% % The corresponding training-dynamics figure will use two rows (1.5B/3B)
% % and three columns (correctness/format/length).

\subsection{Mathematical reasoning}
\paragraph{Task and training.}
We study mathematical reasoning under two competing objectives: answer correctness and response-length compliance. We evaluate DARA on DeepSeek-R1-1.5B, Qwen3-4B-Instruct~\citep{yang2025qwen3}, and DeepSeek-R1-7B~\citep{guo2025deepseek}, and compare against GRPO, GDPO, DVAO~\citep{dvao}, and GD$^2$PO-Hard~\citep{gd2po}, adapting them to the same training configuration as GDPO. This includes DAPO-style \citep{yu2026dapo} training, DeepScaleR-Preview dataset \citep{luo2025deepscaler}, and DeepSeek-R1 prompt format \citep{guo2025deepseek}. All methods train with a maximum response length of 8000 tokens and with two binary rewards: a correctness reward in $\{0,1\}$ indicating whether the extracted final answer matches the ground truth, and a length reward in $\{0,1\}$ indicating whether the response contains at most 4,000 tokens. 

\paragraph{Evaluation.} We evaluate on MATH-500 \citep{hendrycks2021measuring}, AIME 2024 \citep{maa2024aime}, AMC 2022/2023 \citep{maaAMC}, Minerva \citep{lewkowycz2022solving}, and OlympiadBench \citep{he2024olympiadbench}. For each method, we report pass@1 accuracy and the fraction of responses exceeding 4,000 tokens (Exceed). To study optimization speed directly, we evaluate checkpoints every 10 training steps during the first 100 steps.

\paragraph{Training dynamics.}
We first examine how the two reward dimensions evolve during optimization.
Figure~\ref{fig:math_training_dynamics} shows the correctness and length
rewards over the first 100 training steps on DeepSeek-R1-1.5B.
The length objective is acquired considerably faster than the
correctness objective, with most methods rapidly moving toward high length
compliance during early training. Both DARA variants reach high length compliance early,
while DARA-Asym generally retains a higher correctness reward than
DARA-Sym. The difference between methods becomes more visible as the
length reward approaches saturation. Early in training, many rollout
groups contain both compliant and over-length responses, so the length
reward is informative for a large fraction of groups. As compliance
improves, such mixed groups become increasingly rare. The available
relative comparisons therefore become sparser precisely in the regime
where further improvement becomes difficult. This is the regime targeted
by DARA's density calibration.

\begin{figure*}[t]
    \centering
    \includegraphics[width=0.98\textwidth]{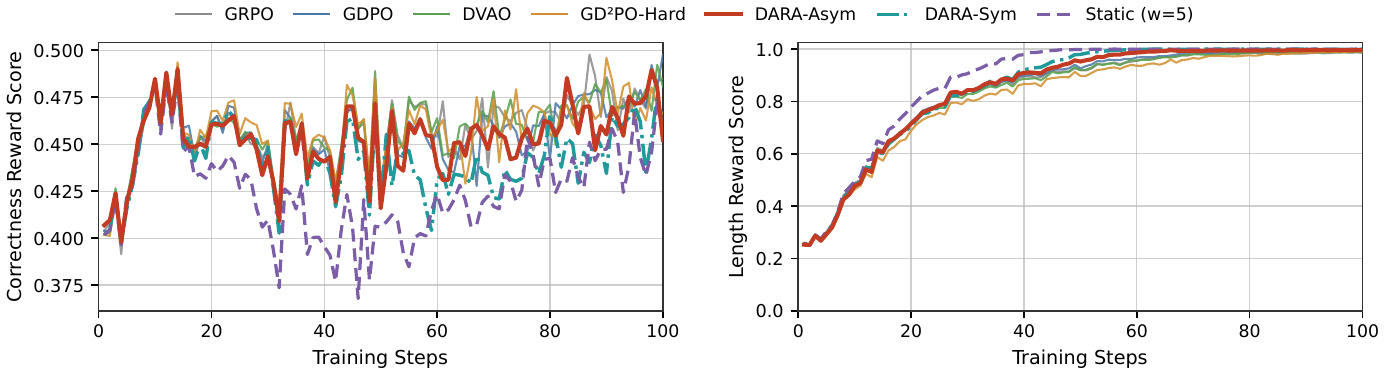}
    \vspace{-4pt}
    \caption{
    Training reward dynamics for mathematical reasoning on DeepSeek-R1-1.5B.
    Left: correctness reward; right: length reward. Static denotes
    the Static-Sym control in Table~\ref{tab:math_ablations}(a).
    }
    \label{fig:math_training_dynamics}
    \vspace{-10pt}
\end{figure*}

\paragraph{The advantage grows near reward saturation.}
To quantify the training dynamics more directly, we measure the first stable
crossing of different length-compliance thresholds.
Figure~\ref{fig:math_thresholds} reports the optimization step at which each
method stably reaches a target compliance level. A stable crossing requires
the trailing 10-step mean to remain above the target for the following
20 optimization steps. At moderate compliance levels, the
differences between methods are relatively small. For example, on
Qwen3-4B-Instruct, DARA-Asym, DARA-Sym, and GDPO all reach the 80\%
threshold at step~20. The separation becomes substantially larger at
stricter thresholds: the same methods reach 99\% compliance at steps~66,
59, and~111, respectively, and at steps~56, 50, and~142 on DeepSeek-R1-7B,
corresponding to 41--65\% fewer steps than GDPO. At lower compliance, the length reward varies
within many rollout groups and therefore already supplies frequent
relative comparisons. Near saturation, most sampled responses satisfy
the length constraint, causing the active-group density of the length
reward to decrease. DARA assigns larger weight to the remaining
informative comparisons, so its relative benefit becomes more pronounced
as the objective becomes sparse.

\begin{figure*}[t]
    %\vspace{-5pt}
    \centering
    \includegraphics[width=0.48\textwidth]{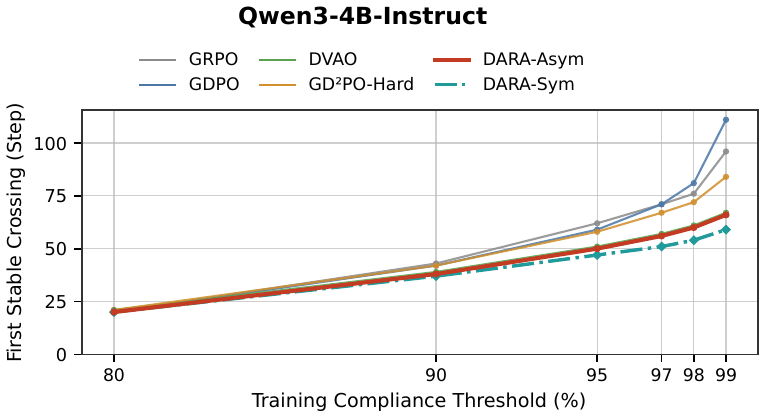}
    \hfill
    \includegraphics[width=0.48\textwidth]{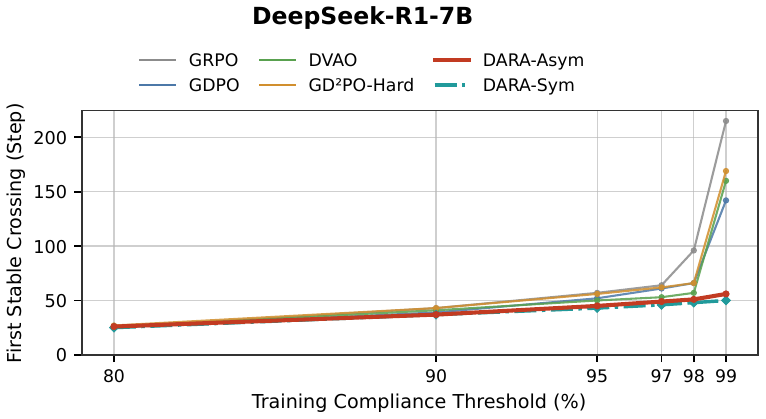}
    \vspace{-4pt}
    \caption{
    First stable crossing step for increasingly strict training-side
    length-compliance thresholds on Qwen3-4B-Instruct (left) and
    DeepSeek-R1-7B (right). Lower values indicate faster acquisition.
    }
    \label{fig:math_thresholds}
    \vspace{-5pt}
\end{figure*}

\paragraph{Training acceleration transfers to held-out evaluation.}
We next examine whether the faster acquisition observed in rollout rewards
transfers to external mathematical reasoning benchmarks.
Figure~\ref{fig:math_eval_trajectories} evaluates DeepSeek-R1-7B every 10
training steps and reports both macro pass@1 accuracy and the percentage of
responses exceeding 4,000 tokens. The Exceed rate falls rapidly during the first
half of training, and both DARA variants
enter the low-Exceed regime earlier than GDPO. Both DARA variants first reach
95\% held-out length compliance at step~50, compared with step~70 for
GDPO. The accuracy trajectories reveal
a common optimization pattern across methods: enforcing the length
constraint initially reduces macro accuracy, followed by partial recovery
later in training. Both DARA variants follow the same overall trajectory.
DARA-Sym achieves lower Exceed rates, while DARA-Asym retains higher
accuracy, revealing different accuracy--compliance trade-offs under
the two implementations.

\begin{figure*}[t]
    \centering
    \includegraphics[width=0.98\textwidth]{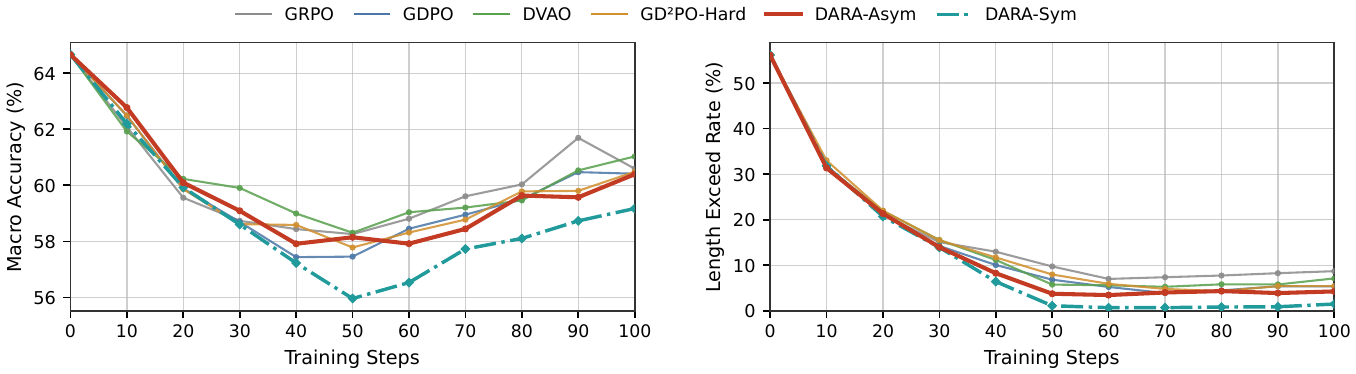}
    \vspace{-4pt}
    \caption{
    Checkpoint-wise held-out mathematical reasoning evaluation on
    DeepSeek-R1-7B. Left: macro pass@1 accuracy; right: percentage of
    responses exceeding 4,000 tokens.
    }
    \label{fig:math_eval_trajectories}
    \vspace{-10pt}
\end{figure*}

%\paragraph{Faster attainment of the held-out compliance target.} The full trajectories suggest that DARA reaches useful levels of length compliance earlier. We summarize this comparison in Figure~\ref{fig:math_eval_speed} by converting Exceed into length compliance and marking a common 95\% target. DARA reaches the target at the earliest or tied-earliest measured checkpoint for all three model scales. On DeepSeek-R1-1.5B, DARA reaches 95\% held-out compliance at step 60. On Qwen3-4B-Instruct and DeepSeek-R1-7B, it reaches the same target at step 50. GDPO reaches 95\% compliance at step 70 in all three settings. On the 4B model, DVAO also reaches the target at step 50, so DARA is tied for the earliest measured checkpoint in this setting. The agreement between the training-side threshold analysis and held-out evaluation is important. DARA first accelerates the acquisition of the length reward during policy optimization, with the largest separation appearing near high compliance. The same acceleration is subsequently visible on unseen reasoning problems, indicating that the effect extends beyond the rollout batches used to construct the policy update.
%\begin{figure*}[t]
%    \vspace{-5pt}
%    \centering
%    \includegraphics[width=\textwidth,
%    trim=0 0.1cm 0 0,
%    clip]{figures/eval_target_attainment.pdf}
%    \vspace{-8pt}
%    \caption{Held-out length compliance during the first 100 optimization steps.}
%    \label{fig:math_eval_speed}
%    \vspace{-8pt}
%\end{figure*}

\begin{table*}[t]
\vspace{-8pt}
\caption{
Aggregate mathematical reasoning results at step 50.
Acc is the macro-average pass@1 accuracy over the five benchmarks,
Exceed is the macro-average percentage of responses longer than 4,000
tokens, and Joint is the macro-average percentage of responses that
are both correct and within the length limit.
Base denotes the pretrained checkpoint. All values are percentages.
}
\label{tab:math_aggregate}
\vspace{6pt}
\centering
\fontsize{8}{9.2}\selectfont
\setlength{\tabcolsep}{3pt}
\renewcommand{\arraystretch}{1.10}
\begin{tabular*}{\textwidth}{@{\extracolsep{\fill}}l*{9}{c}@{}}
\toprule
& \multicolumn{3}{c}{DeepSeek-R1-1.5B}
& \multicolumn{3}{c}{Qwen3-4B-Instruct}
& \multicolumn{3}{c}{DeepSeek-R1-7B} \\
\cmidrule(lr){2-4}
\cmidrule(lr){5-7}
\cmidrule(lr){8-10}
Method
& Acc $\uparrow$ & Exceed $\downarrow$ & Joint $\uparrow$
& Acc $\uparrow$ & Exceed $\downarrow$ & Joint $\uparrow$
& Acc $\uparrow$ & Exceed $\downarrow$ & Joint $\uparrow$ \\
\midrule
Base
& 48.93 & 65.23 & 27.75
& 68.15 & 33.60 & 51.69
& 64.66 & 56.16 & 36.32 \\
\midrule
GRPO
& 45.52 & 11.03 & 44.69
& \textbf{62.14} & 8.07 & \textbf{60.48}
& 58.26 & 9.73 & 56.74 \\

GDPO
& 44.12 & 8.66 & 43.47
& 60.82 & 7.64 & 59.19
& 57.45 & 6.82 & 56.48 \\

DVAO
& 45.58 & 11.35 & 44.57
& 61.37 & 4.77 & 60.34
& 58.15 & 5.75 & 57.53 \\

GD$^2$PO-Hard
& 44.24 & 11.41 & 43.45
& 60.58 & 7.28 & 59.16
& 57.78 & 7.96 & 56.35 \\
\midrule
DARA-Asym
& \textbf{45.83} & 8.16 & \textbf{45.14}
& 60.30 & 4.38 & 59.68
& \textbf{58.31} & 3.73 & \textbf{57.83} \\

DARA-Sym
& 44.43 & \textbf{5.18} & 44.12
& 58.76 & \textbf{1.69} & 58.52
& 55.96 & \textbf{1.09} & 55.79 \\
\bottomrule
\end{tabular*}
\vspace{-10pt}
\end{table*}

\paragraph{Accuracy--compliance trade-off.}
We compare the step-50 checkpoints in
Table~\ref{tab:math_aggregate}. All multi-reward methods substantially
reduce the rate of over-length generations relative to the initial models.
Among the methods in the table, DARA-Sym achieves the lowest average
Exceed rate on all three models, at 5.18\%, 1.69\%, and 1.09\% on
DeepSeek-R1-1.5B, Qwen3-4B-Instruct, and DeepSeek-R1-7B, respectively.
DARA-Asym retains higher accuracy than DARA-Sym on all three models,
reaching 45.83\%, 60.30\%, and 58.31\%, with corresponding Exceed rates
of 8.16\%, 4.38\%, and 3.73\%. Pass@1 accuracy is more
method-dependent. Different aggregation strategies lead on different
model scales, and the method with the lowest Exceed rate does not
necessarily achieve the highest accuracy. This motivates
considering the joint objective rather than either reward in isolation.
On both DeepSeek-R1 models, DARA-Asym achieves the highest Joint score
among the compared methods, reaching 45.14\% and 57.83\%, together with the
highest accuracy. These results characterize the benefit of DARA in terms of optimization dynamics. Its advantage over GDPO is
already visible in the training reward, becomes increasingly pronounced
as length compliance approaches saturation, and transfers to earlier
attainment of the same behavior on held-out benchmarks.

\paragraph{Fixed-weight ablation.}
A natural question is whether DARA's faster length convergence simply comes
from assigning a larger weight to the sparse length reward. We compare DARA with fixed-weight variants that set the correctness weight to one
and the length weight to five (Table~\ref{tab:math_ablations}a). Strong fixed weighting indeed improves length
compliance, but causes a clear accuracy loss. For DARA-Asym, fixing the length
weight reduces Exceed from 4.21\% to 3.14\%, while Acc drops from 48.03\% to
46.84\%. For DARA-Sym, Exceed changes only marginally from 0.89\% to 0.87\%,
while Acc decreases from 46.71\% to 45.97\%. These results show that simply
increasing the sparse-reward weight can accelerate length optimization, but
does so at a larger cost to accuracy.

\paragraph{Extension to a thinking-oriented model.}
At step 50, DARA-Asym performs strongly on both DeepSeek-R1 models, whereas
GRPO gives the best result on Qwen3-4B-Instruct. We examine whether this gap
persists after longer training and whether it depends on the model's reasoning
mode. We extend training to step 100, when the length objective has largely
converged across methods, on both Qwen3-4B-Instruct and Qwen3-4B-Thinking.
After convergence, the gap becomes much smaller for both models. On Qwen3-4B-Instruct, DARA-Asym achieves 62.17\% Acc versus 62.99\% for GRPO, while reducing Exceed
from 0.70\% to 0.25\%. On Qwen3-4B-Thinking, it achieves 64.23\% Acc versus
64.82\% for GRPO, while reducing Exceed from 1.86\% to 0.94\%. Thus, once
length compliance has largely converged, DARA-Asym retains accuracy comparable
to GRPO while enforcing the length constraint more effectively.

\begin{table*}[t]
\vspace{-6pt}
\caption{
Left: dynamic versus fixed reward weighting on
DeepSeek-R1-1.5B. Right: comparison between Qwen3-4B Instruct and Thinking;
underlined values mark DARA-Asym.
}
\label{tab:math_ablations}
\vspace{4pt}
\centering
\fontsize{8}{9.2}\selectfont

\begin{minipage}[t]{0.45\textwidth}
    \centering
    \vspace{16pt}
    {\bfseries (a) Fixed-weight ablation\par}
    \vspace{3pt}

    \setlength{\tabcolsep}{3.2pt}
    \renewcommand{\arraystretch}{1.10}
    \begin{tabular}{@{}lccc@{}}
        \toprule
        Method
        & Acc $\uparrow$
        & Exceed $\downarrow$
        & Joint $\uparrow$ \\
        \midrule
        DARA-Asym
        & \textbf{48.03}
        & 4.21
        & \textbf{47.76} \\

        Static-Asym ($w=5$)
        & 46.84
        & \textbf{3.14}
        & 46.72 \\
        \midrule
        DARA-Sym
        & \textbf{46.71}
        & 0.89
        & \textbf{46.67} \\

        Static-Sym ($w=5$)
        & 45.97
        & \textbf{0.87}
        & 45.96 \\
        \bottomrule
    \end{tabular}
\end{minipage}
\hfill
\begin{minipage}[t]{0.52\textwidth}
    \centering
    {\bfseries (b) Effect of thinking mode\par}
    \vspace{3pt}

    \setlength{\tabcolsep}{2.5pt}
    \renewcommand{\arraystretch}{1.10}
    \begin{tabular}{@{}llccc@{}}
        \toprule
        Model & Method
        & Acc $\uparrow$
        & Exceed $\downarrow$
        & Joint $\uparrow$ \\
        \midrule
        Instruct & GRPO
        & 62.99 & 0.70 & 62.76 \\

        & GDPO
        & 61.91 & 1.47 & 61.42 \\

        & DARA-Asym
        & \underline{62.17}
        & \underline{0.25}
        & \underline{62.11} \\

        & DARA-Sym
        & 60.48 & 0.01 & 60.48 \\
        \midrule
        Thinking & GRPO
        & 64.82 & 1.86 & 64.21 \\

        & GDPO
        & 63.68 & 1.44 & 63.41 \\

        & DARA-Asym
        & \underline{64.23}
        & \underline{0.94}
        & \underline{64.08} \\

        & DARA-Sym
        & 62.45 & 0.16 & 62.44 \\
        \bottomrule
    \end{tabular}
\end{minipage}

\vspace{-10pt}
\end{table*}

\FloatBarrier
\section{Conclusion}

We studied uneven learning in multi-reward RL through the relationship between reward activity and optimization signal. Under idealized GDPO normalization, we showed that a reward's batch-level advantage energy is proportional to its active-group density. Based on this analysis, we introduced DARA, which applies a density correction to strengthen signals from rewards that provide useful comparisons less frequently. Experiments on tool calling and mathematical reasoning show that DARA learns targeted behaviors faster and more stably, reaching high task performance in fewer training steps. These results highlight active-group density as a simple and measurable quantity for understanding and calibrating reward contributions in multi-reward RL.
\subsection*{AI use statement}

We used AI coding assistants to help implement training and evaluation
scripts, process experimental logs, and we used large
language models to polish the writing of this paper. The research questions,
theoretical analysis, and experimental design were developed by the authors.
All AI-assisted code was reviewed and tested by the authors, and all reported
results were checked against the raw experimental outputs. We take full
responsibility for the content of this paper.

\subsection*{Reproducibility statement}

The derivations of the density--energy relation and the density calibration
rule are given in Appendix~\ref{app:density_energy_proofs}. DARA changes only
the reward aggregation step of GDPO, as specified in
Equations~\ref{eq:density_weight}--\ref{eq:dara}. For tool calling,
Appendix~\ref{app:tool_calling_details} reports the training configuration,
reward definitions, number of training seeds, and BFCL-v4 evaluation protocol.
For mathematical reasoning, Appendix~\ref{app:math_details} reports the
training data, prompt format, reward definitions, advantage estimators,
evaluation benchmarks, decoding parameters, and metric definitions. We will
release our code and training configurations.

\bibliography{iclr2027_conference}
\bibliographystyle{iclr2027_conference}

\appendix
\raggedbottom
\section{Proofs for the Density--Energy Analysis}
\label{app:density_energy_proofs}

\subsection{Derivation of the Density--Energy Relation}
\label{app:proof_density_energy}

We provide the full derivation of Equation~\ref{eq:density_energy}.
For an active rollout group $i$, the sample variance of reward $k$ is

\begin{equation}
\sigma_{i,k}^{2}
=
\frac{1}{G-1}
\sum_{j=1}^{G}
\left(
r_k^{(i,j)}-\mu_{i,k}
\right)^2.
\label{eq:app_sample_variance}
\end{equation}

Using the reward-wise advantage $A_k^{(i,j)}$ defined in
Section~\ref{sec:method}, and omitting the numerical stabilizer
$\epsilon$ for the analytical derivation, we obtain

\begin{align}
\sum_{j=1}^{G}
\left(A_k^{(i,j)}\right)^2
&=
\frac{
\sum_{j=1}^{G}
\left(
r_k^{(i,j)}-\mu_{i,k}
\right)^2
}{
\sigma_{i,k}^{2}
}
\notag\\
&=
G-1.
\label{eq:app_group_energy}
\end{align}

Thus, every active group contributes exactly $G-1$ units of advantage
energy.

If reward $k$ is inactive in group $i$, all reward values within the group
are identical. Their centered rewards are therefore zero, and the
corresponding reward-wise advantages contribute zero energy.

From Equation~\ref{eq:group_activity},

\begin{equation}
\sum_{i=1}^{B} I_{i,k}
=
B\pi_k,
\label{eq:app_num_active_groups}
\end{equation}

so reward $k$ is active in exactly $B\pi_k$ groups.

Using the definition of advantage energy in
Equation~\ref{eq:advantage_energy}, we therefore have

\begin{align}
E_k
&=
\sum_{i=1}^{B}
\sum_{j=1}^{G}
\left(A_k^{(i,j)}\right)^2
\notag\\
&=
B\pi_k(G-1).
\label{eq:app_density_energy}
\end{align}

This recovers Equation~\ref{eq:density_energy}. Since reward-wise
normalization fixes the energy contributed by each active group to $G-1$,
the total advantage energy varies linearly with active-group density.

\subsection{Derivation of the Density Calibration}
\label{app:proof_density_calibration}

We next derive the uncapped density coefficient $\alpha_k^\star$ used in
Section~\ref{sec:dara}.

Suppose reward $k$ is multiplied by a scalar coefficient $\alpha_k$ before
aggregation. By the definition of advantage energy in
Equation~\ref{eq:advantage_energy}, its calibrated
advantage energy is

\begin{equation}
E_k(\alpha_k)
=
\alpha_k^2 E_k.
\label{eq:app_weighted_energy}
\end{equation}

Substituting the density--energy relation from
Equation~\ref{eq:density_energy} gives

\begin{equation}
E_k(\alpha_k)
=
\alpha_k^2
B\pi_k(G-1).
\label{eq:app_weighted_density_energy}
\end{equation}

Let

\begin{equation}
\pi_{\mathrm{ref}}
=
\max_{k:\pi_k>0}\pi_k
\label{eq:app_reference_density}
\end{equation}

denote the largest active-group density. Its corresponding advantage energy
is

\begin{equation}
E_{\mathrm{ref}}
=
B\pi_{\mathrm{ref}}(G-1).
\label{eq:app_reference_energy}
\end{equation}

To match the calibrated energy of reward $k$ to the reference energy, we
require

\begin{equation}
E_k(\alpha_k)
=
E_{\mathrm{ref}}.
\label{eq:app_energy_matching}
\end{equation}

Substituting Equations~\ref{eq:app_weighted_density_energy} and
\ref{eq:app_reference_energy} yields

\begin{equation}
\alpha_k^2
B\pi_k(G-1)
=
B\pi_{\mathrm{ref}}(G-1).
\end{equation}

Canceling the common factors gives

\begin{equation}
\alpha_k^2\pi_k
=
\pi_{\mathrm{ref}}.
\end{equation}

The positive solution is therefore

\begin{equation}
\alpha_k^\star
=
\sqrt{
\frac{\pi_{\mathrm{ref}}}{\pi_k}
}.
\label{eq:app_optimal_weight}
\end{equation}

This recovers the coefficient $\alpha_k^\star$ in Section~\ref{sec:dara}.

Finally, substituting $\alpha_k^\star$ into
Equation~\ref{eq:app_weighted_density_energy} gives

\begin{align}
E_k(\alpha_k^\star)
&=
\left(
\frac{\pi_{\mathrm{ref}}}{\pi_k}
\right)
B\pi_k(G-1)
\notag\\
&=
B\pi_{\mathrm{ref}}(G-1).
\label{eq:app_calibrated_energy}
\end{align}

This recovers the matched energy stated in Section~\ref{sec:dara}. Hence, before applying
the practical cap in Equation~\ref{eq:density_weight}, density calibration
removes the dependence of reward-wise advantage energy on the original
active-group density.

\begin{figure}[t]
    \centering
    \includegraphics[width=\textwidth]{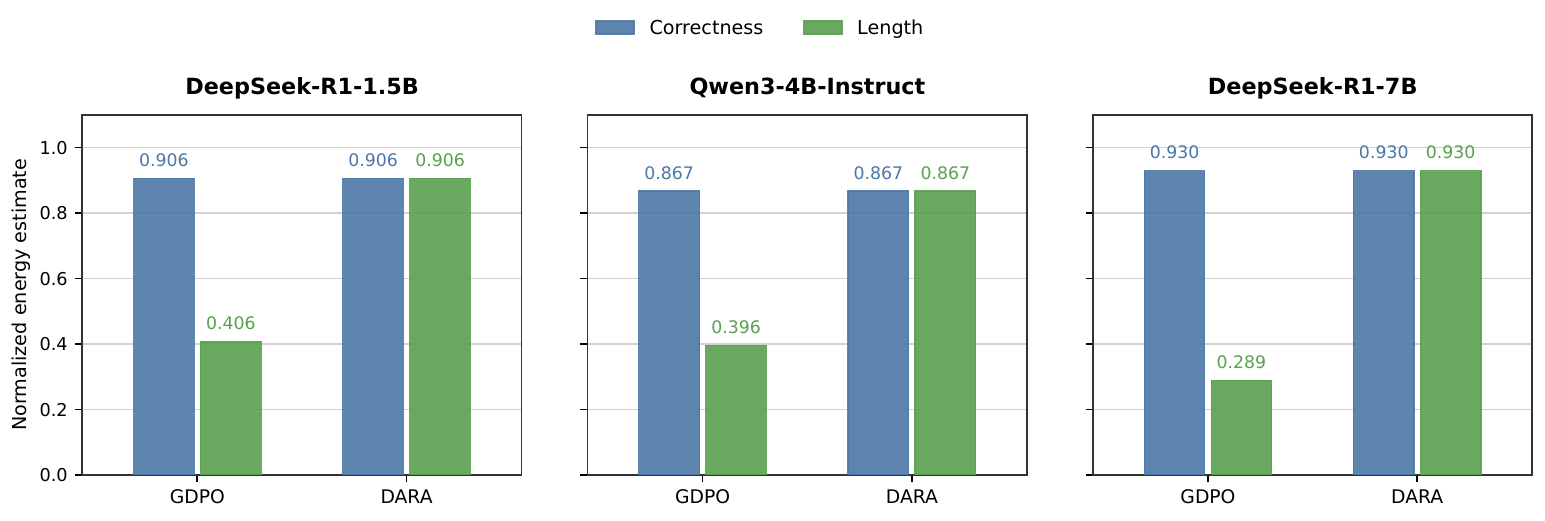}
    \caption{Reward-wise advantage energy on batches of mathematical
    reasoning training at step 40, before (GDPO) and after (DARA) density
    calibration. Values are from the recorded active-group densities
    and weights as $\pi_k$ and $w_k^2\pi_k$, which equal $E_k/[B(G-1)]$ by
    Equation~\ref{eq:density_energy}.}
    \label{fig:energy_logged}
\end{figure}

\begin{figure}[t]
    \centering
    \includegraphics[width=\textwidth]{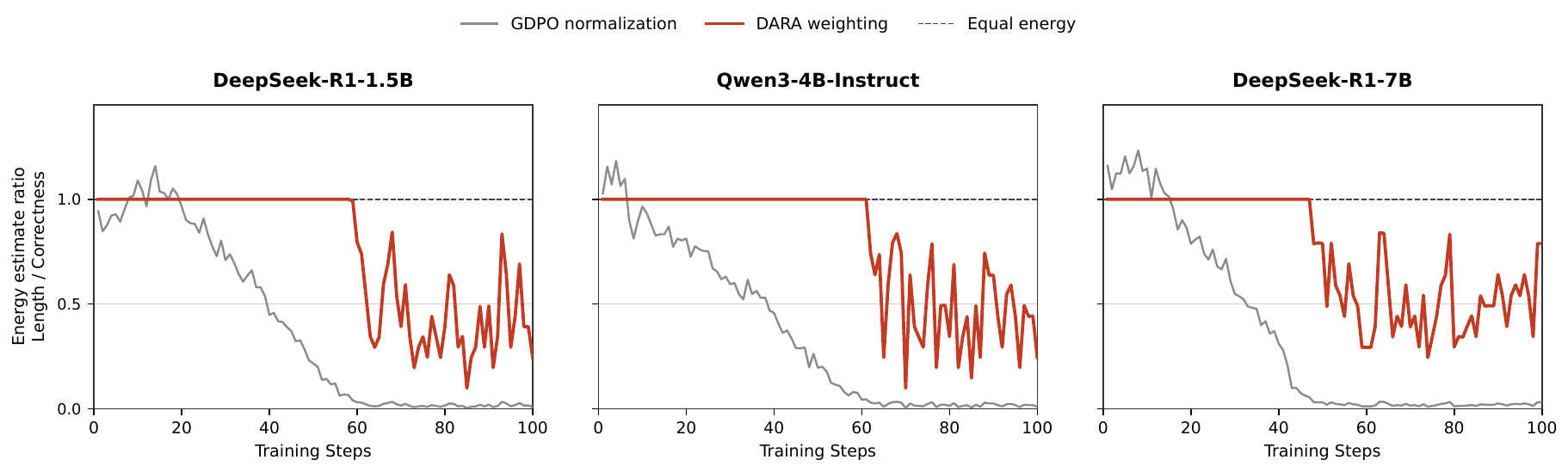}
    \caption{Ratio of the length to the correctness energy estimate on the same
    logged batches throughout training, without (GDPO normalization) and with
    (DARA weighting) density calibration. The dashed line marks equal energy.}
    \label{fig:energy_trajectory}
\end{figure}

\paragraph{Energy on training batches.}
Figure~\ref{fig:energy_logged} illustrates this calibration on logged batches
of DARA-Sym training in the mathematical reasoning experiments, where length is
the sparse reward. At step 40, the length reward is active in fewer groups than
the correctness reward, and under GDPO normalization its energy is 31--46\% of
that of correctness; DARA weighting matches the two.
Figure~\ref{fig:energy_trajectory} follows the length-to-correctness energy
ratio through training. Under GDPO normalization the ratio declines as length
compliance approaches saturation. DARA keeps the ratio at one until the
required length weight exceeds the cap $w_{\max}=5$, at steps 59, 62, and 48
for the three models. From these steps onward, the ratio stays below $0.04$
under GDPO normalization, whereas its median under DARA is 0.39--0.49. This
late phase corresponds to the strict compliance thresholds at which DARA and
GDPO separate most in Figure~\ref{fig:math_thresholds}.
\subsection{From Advantage Energy to Reward-Wise Policy Gradients}
\label{app:energy_gradient}

We now relate advantage energy to the policy-gradient contribution of each
reward. Where $E_k=B\pi_k(G-1)$ is used, we adopt the idealization of
Equation~\ref{eq:density_energy}; the other identities hold for the measured
$E_k$.

\paragraph{Setup.}
Fix the current policy and put $N=BG$. We analyze the reward-dependent
surrogate gradient at the rollout policy before clipping, with advantages and
loss weights held fixed during differentiation. Let $q_{ijt}$ be the
coefficient of token $t$ in the loss reduction, including its mask, and define
the effective response score and the gradient contributed by reward $k$ as
\begin{equation}
 z^{(i,j)}
 =N\sum_t q_{ijt}\nabla_\theta
       \log\pi_\theta\bigl(y_t^{(i,j)}\mid x_i,y_{<t}^{(i,j)}\bigr),
 \qquad
 g_k=\frac{1}{N}\sum_{i,j}A_k^{(i,j)}z^{(i,j)}.
 \label{eq:theory_effective_score}
\end{equation}
Sequence sums, per-response token means, and global token means correspond to
different choices of $q_{ijt}$.
Let $\mathcal F$ contain the prompts, all reward values, and the loss weights,
assume finite conditional second moments, and write
\begin{equation}
 m^{(i,j)}=\mathbb E[z^{(i,j)}\mid\mathcal F],\qquad
 \xi^{(i,j)}=z^{(i,j)}-m^{(i,j)},\qquad
 h_k=\frac{1}{N}\sum_{i,j}A_k^{(i,j)}m^{(i,j)}.
 \label{eq:theory_signal_noise}
\end{equation}
The conditional mean $h_k$ is the learning signal of reward $k$, and
$\xi^{(i,j)}$ is the fluctuation around it.
Flatten response indices to $a,b\in\{1,\ldots,N\}$, let $\mathbf A_k$ be the
vector of advantages of reward $k$, and define
\begin{equation}
 C_{ab}=\mathbb E[\langle\xi_a,\xi_b\rangle\mid\mathcal F],
 \qquad
 P=\operatorname{diag}(P_G,\ldots,P_G),\qquad
 P_G=I_G-\frac{\mathbf 1_G\mathbf 1_G^\top}{G}.
 \label{eq:theory_contrast_covariance}
\end{equation}
$C$ is positive semidefinite.
Since reward-wise normalization gives $\sum_jA_k^{(i,j)}=0$ in every group,
$P\mathbf A_k=\mathbf A_k$: any score component shared by all responses to a
prompt cancels exactly, and the remaining within-group dependence is kept in
$PCP$.

\begin{proposition}[Advantage energy and gradient geometry]
\label{prop:theory_gradient_geometry}
For any rewards $k$ and $l$,
\begin{equation}
 \mathbb E[g_k\mid\mathcal F]=h_k,\qquad
 \mathbb E[\langle g_k,g_l\rangle\mid\mathcal F]
 =\langle h_k,h_l\rangle
   +\frac{\mathbf A_k^\top PCP\mathbf A_l}{N^2}.
 \label{eq:theory_exact_geometry}
\end{equation}
If, for some $c>0$ and $0\leq\delta<1$, the covariance on group contrasts
satisfies
\begin{equation}
 \|PCP-cP\|_{\mathrm{op}}\leq\delta c,
 \label{eq:theory_covariance_condition}
\end{equation}
then
\begin{equation}
 \mathbb E[\langle g_k,g_l\rangle\mid\mathcal F]
 =\langle h_k,h_l\rangle
   +\frac{c}{N^2}\langle\mathbf A_k,\mathbf A_l\rangle+R_{kl},
 \qquad
 |R_{kl}|\leq\frac{\delta c}{N^2}\sqrt{E_kE_l}.
 \label{eq:theory_geometry_bound}
\end{equation}
In particular, the gradient fluctuation
$V_k=\mathbb E[\|g_k-h_k\|^2\mid\mathcal F]$ satisfies
\begin{equation}
 (1-\delta)\frac{cE_k}{N^2}\leq V_k
 \leq(1+\delta)\frac{cE_k}{N^2},\qquad
 \mathbb E[\|g_k\|^2\mid\mathcal F]=\|h_k\|^2+V_k.
 \label{eq:theory_norm_bounds}
\end{equation}
\end{proposition}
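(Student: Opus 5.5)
The plan is to work conditionally on $\mathcal F$ throughout, so that every advantage $A_k^{(i,j)}$ is a fixed scalar and only the effective scores $z^{(i,j)}$ are random. The first identity is linearity of conditional expectation:
\[
\mathbb E[g_k\mid\mathcal F]=\frac1N\sum_{i,j}A_k^{(i,j)}m^{(i,j)}=h_k .
\]

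For the second moment, decompose $g_k=h_k+\eta_k$ with $\eta_k=\frac1N\sum_a A_{k,a}\xi_a$. Expanding $\langle g_k,g_l\rangle$ produces four terms:
\begin{itemize}
\item $\langle h_k,h_l\rangle$, which is $\mathcal F$-measurable;
\item two cross terms $\langle h_k,\eta_l\rangle$ and $\langle \eta_k,h_l\rangle$, whose conditional expectation vanishes because $h_k$ is $\mathcal F$-measurable and $\mathbb E[\xi_a\mid\mathcal F]=0$ (finite second moments justify the interchange);
\item $\mathbb E[\langle\eta_k,\eta_l\rangle\mid\mathcal F]=N^{-2}\sum_{a,b}A_{k,a}A_{l,b}C_{ab}=N^{-2}\mathbf A_k^\top C\mathbf A_l$.
\end{itemize}
Since reward-wise normalization makes each group's advantages sum to zero, $P\mathbf A_k=\mathbf A_k$. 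Writing $\mathbf A_k^\top C\mathbf A_l=\mathbf A_k^\top PCP\mathbf A_l$ then gives \eqref{eq:theory_exact_geometry}.

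Under \eqref{eq:theory_covariance_condition}, write $PCP=cP+(PCP-cP)$. The first piece gives $c\,\mathbf A_k^\top P\mathbf A_l=c\langle\mathbf A_k,\mathbf A_l\rangle$, again because $P$ fixes both vectors. The remainder is
\[
R_{kl}=N^{-2}\,\mathbf A_k^\top(PCP-cP)\mathbf A_l .
\]
Cauchy--Schwarz with the operator-norm bound gives $|R_{kl}|\le N^{-2}\delta c\|\mathbf A_k\|\|\mathbf A_l\|$. By Definition~\ref{def:advantage_energy}, $\|\mathbf A_k\|^2=E_k$, which is exactly \eqref{eq:theory_geometry_bound}.

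For the final claim, note that $V_k=\mathbb E[\|\eta_k\|^2\mid\mathcal F]$, which is the diagonal case $k=l$ of the noise term. So $V_k=N^{-2}\mathbf A_k^\top PCP\mathbf A_k$, and its deviation from $cE_k/N^2$ is at most $\delta cE_k/N^2$. This yields the two-sided bound in \eqref{eq:theory_norm_bounds}. The Pythagorean identity $\mathbb E[\|g_k\|^2\mid\mathcal F]=\|h_k\|^2+V_k$ follows from the same vanishing of cross terms.

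None of these steps is hard. The only point needing care is the zero-sum property $P\mathbf A_k=\mathbf A_k$ under the stated conventions. Inactive groups are assigned zero advantages, so the property holds there trivially. With $\epsilon>0$, each active group's advantages are rescaled by the positive constant $1/(\sigma_{i,k}+\epsilon)$, which preserves the zero group sum. The identities therefore hold for the measured $E_k$. The idealization $E_k=B\pi_k(G-1)$ from \eqref{eq:density_energy} is needed only when one substitutes it into the bounds.
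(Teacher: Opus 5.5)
Your proposal is correct and follows essentially the same route as the paper's proof. Both arguments substitute $z_a=m_a+\xi_a$, drop the cross terms by conditional zero mean, insert $P$ using $P\mathbf A_k=\mathbf A_k$, bound $R_{kl}=N^{-2}\mathbf A_k^\top(PCP-cP)\mathbf A_l$ by the operator norm and $\|\mathbf A_k\|^2=E_k$, and take $l=k$ for the fluctuation bounds.
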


\begin{proof}
Substitute $z_a=m_a+\xi_a$ into Equation~\ref{eq:theory_effective_score}.
Conditional on $\mathcal F$, the advantages are fixed and every $\xi_a$ has
zero mean, so the mean--residual terms vanish and the residual--residual term
is $N^{-2}\mathbf A_k^\top C\mathbf A_l$. Inserting $P$ on both sides gives
Equation~\ref{eq:theory_exact_geometry}. With $\Delta=PCP-cP$, the remainder
is $R_{kl}=N^{-2}\mathbf A_k^\top\Delta\mathbf A_l$, and the operator-norm
bound together with $\|\mathbf A_k\|^2=E_k$ gives
Equation~\ref{eq:theory_geometry_bound}. Setting $l=k$ gives
Equation~\ref{eq:theory_norm_bounds}.
\end{proof}

Condition~\ref{eq:theory_covariance_condition} holds with $\delta=0$, for
example, when $\xi^{(i,j)}=b_i+e^{(i,j)}$, where the prompt-level terms $b_i$
are arbitrary and the response-level terms $e^{(i,j)}$ are conditionally
zero-mean, mutually uncorrelated, uncorrelated with the $b_i$, and have
common second moment $c$; the mean scores $m^{(i,j)}$ are unrestricted. If the
response-level second moments differ, the
residual term becomes the weighted energy
$N^{-2}\sum_{i,j}c_{ij}A_k^{(i,j)}A_l^{(i,j)}$. In the model with $\delta=0$,
the correlation between the gradient fluctuations of two rewards equals the
correlation of their advantage vectors,
\begin{equation}
 \frac{\mathbb E[\langle g_k-h_k,g_l-h_l\rangle\mid\mathcal F]}
 {\sqrt{V_kV_l}}
 =\frac{\langle\mathbf A_k,\mathbf A_l\rangle}{\sqrt{E_kE_l}},
 \label{eq:theory_residual_correlation}
\end{equation}
which can be computed directly from rollout rewards.

\begin{corollary}[Symmetric calibration and batch normalization]
\label{cor:theory_sym_normalization}
Under the density--energy identity, the capped DARA-Sym weights give
$w_k^2E_k=\min\{w_{\max}^2E_k,\,E_{\mathrm{ref}}\}$ with
$E_{\mathrm{ref}}=B\pi_{\mathrm{ref}}(G-1)$. Hence, under
Condition~\ref{eq:theory_covariance_condition}, every positive-density reward
whose cap does not bind satisfies
\begin{equation}
 (1-\delta)\frac{cE_{\mathrm{ref}}}{N^2}\leq w_k^2V_k
 \leq(1+\delta)\frac{cE_{\mathrm{ref}}}{N^2}.
 \label{eq:theory_calibrated_variance}
\end{equation}
Moreover, $\mathbf S_{\mathrm{sym}}=\sum_kw_k\mathbf A_k$ has
$\mu_{\mathcal B}(S_{\mathrm{sym}})=0$, and with
$Z_{\mathrm{sym}}=\sigma_{\mathcal B}(S_{\mathrm{sym}})+\epsilon$,
\begin{equation}
 \widehat g_{\mathrm{sym}}
 =\frac{1}{N}\sum_{i,j}\widehat A_{\mathrm{sym}}^{(i,j)}z^{(i,j)}
 =\frac{1}{Z_{\mathrm{sym}}}\sum_k w_kg_k .
 \label{eq:theory_sym_gradient}
\end{equation}
\end{corollary}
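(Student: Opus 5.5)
The plan has three parts, each a direct consequence of earlier results. First the energy identity for the capped weights, then the variance sandwich from Proposition~\ref{prop:theory_gradient_geometry}, and finally the zero batch mean and gradient factorization.

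\textbf{Step 1: the capped energy identity.} Fix a reward $k$ with $\pi_k>0$. By Equation~\ref{eq:density_weight}, $w_k=\min(w_{\max},\sqrt{\pi_{\mathrm{ref}}/\pi_k})$, so
\[
w_k^2=\min\{w_{\max}^2,\ \pi_{\mathrm{ref}}/\pi_k\}.
\]
Multiplying by $E_k=B\pi_k(G-1)$ from Equation~\ref{eq:density_energy} gives
\[
w_k^2E_k=\min\{w_{\max}^2E_k,\ B\pi_{\mathrm{ref}}(G-1)\}=\min\{w_{\max}^2E_k,\ E_{\mathrm{ref}}\}.
\]
For $\pi_k=0$ both sides vanish, since $E_k=0$ and $w_k=1$. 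When the cap does not bind, this reduces to $w_k^2E_k=E_{\mathrm{ref}}$.

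\textbf{Step 2: the calibrated variance bound.} Multiply the two-sided bound in Equation~\ref{eq:theory_norm_bounds} by $w_k^2>0$ and substitute $w_k^2E_k=E_{\mathrm{ref}}$. This yields Equation~\ref{eq:theory_calibrated_variance} at once.

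\textbf{Step 3: zero batch mean.} Reward-wise normalization gives $\sum_jA_k^{(i,j)}=0$ in every group. Inactive groups also sum to zero, because their advantages are assigned as zero. Summing over groups and over $k$ with fixed $w_k$ shows $\sum_{i,j}S_{\mathrm{sym}}^{(i,j)}=0$, hence $\mu_{\mathcal B}(S_{\mathrm{sym}})=0$.

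\textbf{Step 4: gradient factorization.} Since the mean is zero, $\widehat A_{\mathrm{sym}}^{(i,j)}=S_{\mathrm{sym}}^{(i,j)}/Z_{\mathrm{sym}}$. The weights $w_k$ and the scalar $Z_{\mathrm{sym}}$ are batch-level constants, held fixed during differentiation per the setup. Expanding $S_{\mathrm{sym}}=\sum_kw_kA_k$ and exchanging the sums over $(i,j)$ and $k$ in $\frac1N\sum_{i,j}\widehat A_{\mathrm{sym}}^{(i,j)}z^{(i,j)}$ gives
\[
\frac{1}{Z_{\mathrm{sym}}}\sum_kw_k\cdot\frac1N\sum_{i,j}A_k^{(i,j)}z^{(i,j)}=\frac{1}{Z_{\mathrm{sym}}}\sum_k w_kg_k.
\]

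\textbf{Expected obstacle.} There is no real technical difficulty. The only care point is the conventions: the idealized setting $\epsilon=\tau=0$ with zero advantages on inactive groups is needed for exact group sums of zero. I also need to confirm that the definition of $\pi_{\mathrm{ref}}$ (max over all $k$ versus over positive-density $k$) makes no difference, which holds whenever some $\pi_k>0$.
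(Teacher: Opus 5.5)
Your proposal is correct and matches the paper's proof step for step: you square the capped weight and multiply by $E_k=B\pi_k(G-1)$, scale the bounds of Proposition~\ref{prop:theory_gradient_geometry} by $w_k^2$, then use the within-group zero sums to make the batch mean vanish and conclude by linearity. One small correction to your closing remark: the zero group sums, and hence your Steps 3--4, hold exactly for any $\epsilon>0$ because the denominator $\sigma_{i,k}+\epsilon$ is constant within a group, and $\tau$ never enters the advantages, so only Steps 1--2 rely on the idealization.
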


\begin{proof}
The first identity follows by multiplying the squared weight in
Equation~\ref{eq:density_weight} by $E_k=B\pi_k(G-1)$, and multiplying
Equation~\ref{eq:theory_norm_bounds} by $w_k^2$ gives
Equation~\ref{eq:theory_calibrated_variance}. Each $\mathbf A_k$ sums to zero
within every group, and so does $\mathbf S_{\mathrm{sym}}$; batch centering in
Equation~\ref{eq:dara} therefore subtracts zero, and linearity gives
Equation~\ref{eq:theory_sym_gradient}.
\end{proof}

\paragraph{The signal term for binary rewards.}
The signal $h_k$ in Proposition~\ref{prop:theory_gradient_geometry} is
unrestricted in general, but for a binary reward its average over the other
rewards has an explicit direction. Specialize to the sequence score
$z^{(i,j)}=\nabla_\theta\log\pi_\theta(y^{(i,j)}\mid x_i)$, write
$p_{i,k}=\Pr_\theta(r_k=1\mid x_i)$ and $f_{i,k}=G^{-1}\sum_jr_k^{(i,j)}$, and
let $\mathcal F_k\subseteq\mathcal F$ be generated by the prompts and the values
of reward $k$, so that $\mathbb E[g_k\mid\mathcal F_k]=\mathbb E[h_k\mid\mathcal F_k]$.

\begin{proposition}[Signal direction of a binary reward]
\label{prop:theory_binary_signal}
Given the prompts, suppose that all $BG$ responses are sampled independently
with $y^{(i,j)}\sim\pi_\theta(\cdot\mid x_i)$, that
$r_k^{(i,j)}=R_k(x_i,y^{(i,j)})\in\{0,1\}$ for a reward function $R_k$ that does
not depend on $\theta$, and that $0<p_{i,k}<1$. Then
\begin{equation}
 \mathbb E[g_k\mid\mathcal F_k]
 =\frac1N\sum_{i:\,0<f_{i,k}<1}
 \frac{G f_{i,k}(1-f_{i,k})}{(\sigma_{i,k}+\epsilon)\,p_{i,k}(1-p_{i,k})}\,
 \nabla_\theta p_{i,k},
 \label{eq:theory_binary_signal}
\end{equation}
and the positive part used by DARA-Asym satisfies
\begin{equation}
 \mathbb E\Bigl[\frac1N\sum_{i,j}\bigl[A_k^{(i,j)}\bigr]_+z^{(i,j)}
 \Bigm|\mathcal F_k\Bigr]
 =\frac1N\sum_{i:\,0<f_{i,k}<1}
 \frac{G f_{i,k}(1-f_{i,k})}{(\sigma_{i,k}+\epsilon)\,p_{i,k}}\,
 \nabla_\theta p_{i,k}.
 \label{eq:theory_binary_signal_asym}
\end{equation}
\end{proposition}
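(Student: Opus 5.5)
The plan is to compute the conditional expectation response by response. Conditional on $\mathcal F_k$, the advantages $A_k^{(i,j)}=(r_k^{(i,j)}-f_{i,k})/(\sigma_{i,k}+\epsilon)$ are fixed numbers. This holds because $\mu_{i,k}=f_{i,k}$ and $\sigma_{i,k}$ are functions of the reward-$k$ values. Inactive groups ($f_{i,k}\in\{0,1\}$) carry zero advantages by the convention of Section~\ref{sec:density_energy}, so they drop out. Hence
\[
\mathbb E[g_k\mid\mathcal F_k]=\frac1N\sum_{i,j}A_k^{(i,j)}\,\mathbb E\bigl[z^{(i,j)}\mid\mathcal F_k\bigr],
\]
and everything reduces to computing $\mathbb E[z^{(i,j)}\mid\mathcal F_k]$ with $z^{(i,j)}=\nabla_\theta\log\pi_\theta(y^{(i,j)}\mid x_i)$.

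\textbf{Step 1 (reduce to own reward).} Given the prompts, the $BG$ responses are independent, and $r_k^{(i,j)}=R_k(x_i,y^{(i,j)})$ depends only on $y^{(i,j)}$. Therefore conditioning $z^{(i,j)}$ on all reward-$k$ values is the same as conditioning on its own value $r_k^{(i,j)}$. I would justify this by the standard fact that if $(U,V)$ is independent of $W$, then $\mathbb E[U\mid V,W]=\mathbb E[U\mid V]$, applied with $U=z^{(i,j)}$, $V=r_k^{(i,j)}$, and $W$ the remaining rewards.

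\textbf{Step 2 (score-function identity).} I would then show
\[
\mathbb E[z\mid r_k=1]=\frac{\nabla_\theta p_{i,k}}{p_{i,k}},\qquad
\mathbb E[z\mid r_k=0]=-\frac{\nabla_\theta p_{i,k}}{1-p_{i,k}}.
\]
For the first, write $\mathbb E[z\mathbf 1[R_k=1]]=\sum_y \pi_\theta(y\mid x_i)\mathbf 1[R_k(x_i,y)=1]\nabla\log\pi_\theta=\nabla_\theta\sum_y\pi_\theta\mathbf 1[R_k=1]=\nabla p_{i,k}$, using that $R_k$ does not depend on $\theta$. Dividing by $p_{i,k}\in(0,1)$ gives the result. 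The second follows in the same way with $\nabla(1-p_{i,k})=-\nabla p_{i,k}$. The interchange of gradient and sum is immediate for finite response spaces, which is the case for bounded-length token sequences.

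\textbf{Step 3 (group sums).} In an active group, let $S=\sigma_{i,k}+\epsilon$. Then
\[
\sum_j(r_j-f)r_j=Gf-Gf^2=Gf(1-f),\qquad \sum_j(r_j-f)(1-r_j)=-Gf(1-f).
\]
So the group contributes
\[
\frac{Gf(1-f)}{S}\Bigl(\frac1p+\frac1{1-p}\Bigr)\nabla p=\frac{Gf(1-f)}{S\,p(1-p)}\nabla p,
\]
which gives Equation~\ref{eq:theory_binary_signal}.

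For Equation~\ref{eq:theory_binary_signal_asym}, note that in an active group $A_k^{(i,j)}>0$ exactly when $r_j=1$. Thus only the first sum survives, which yields $Gf(1-f)\nabla p/(S\,p)$. The identity $\mathbb E[g_k\mid\mathcal F_k]=\mathbb E[h_k\mid\mathcal F_k]$ follows from the tower property, since $\mathcal F_k\subseteq\mathcal F$.

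The main obstacle I expect is Step 1. The argument must state carefully why conditioning on the whole batch of reward-$k$ values, which also determines $f_{i,k}$ and $\sigma_{i,k}$, does not bias each response's score beyond its own reward value. This is where independence of the samples and $\theta$-independence of $R_k$ are essential. The remaining steps are direct algebra.
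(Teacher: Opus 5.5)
Your proof is correct and follows the paper's argument essentially step for step. You fix the $\mathcal F_k$-measurable advantages, use independence of the responses to reduce $\mathbb E[z^{(i,j)}\mid\mathcal F_k]$ to the conditional score mean given that response's own reward value, obtain these means from $\nabla_\theta p_{i,k}=\mathbb E[r_k z\mid x_i]$, and sum over mixed groups. The only cosmetic difference is that you get $\mathbb E[z\mid x_i,r_k=0]=-\nabla_\theta p_{i,k}/(1-p_{i,k})$ by differentiating $1-p_{i,k}$ directly, whereas the paper uses the score identity $\mathbb E[z\mid x_i]=0$, which is the same fact.
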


\begin{proof}
Write $\nu_{i,a}=\mathbb E[z\mid x_i,r_k=a]$ for $a\in\{0,1\}$. The score
identity $\mathbb E[z\mid x_i]=0$ gives $p_{i,k}\nu_{i,1}+(1-p_{i,k})\nu_{i,0}=0$,
and differentiating $p_{i,k}=\mathbb E[r_k\mid x_i]$ gives
$\nabla_\theta p_{i,k}=\mathbb E[r_kz\mid x_i]=p_{i,k}\nu_{i,1}$. Hence
$\nu_{i,1}=\nabla_\theta p_{i,k}/p_{i,k}$ and
$\nu_{i,1}-\nu_{i,0}=\nabla_\theta p_{i,k}/[p_{i,k}(1-p_{i,k})]$. Independence of
the responses gives $\mathbb E[z^{(i,j)}\mid\mathcal F_k]=\nu_{i,r_k^{(i,j)}}$.
In a mixed group, $A_k^{(i,j)}=(r_k^{(i,j)}-f_{i,k})/(\sigma_{i,k}+\epsilon)$,
so the advantage-weighted conditional means sum to
$Gf_{i,k}(1-f_{i,k})(\nu_{i,1}-\nu_{i,0})/(\sigma_{i,k}+\epsilon)$, and their
positive parts to $Gf_{i,k}(1-f_{i,k})\nu_{i,1}/(\sigma_{i,k}+\epsilon)$.
Constant groups have zero advantages. Substitution gives both identities.
\end{proof}

Averaged over the other rewards, every mixed group of a binary reward thus
contributes a positive multiple of the gradient that raises that reward's
success probability on its prompt, and the positive-only term of DARA-Asym
contributes a positive multiple of the same per-prompt gradient. Conditional
on $\mathcal F$, the DARA-Sym weight $w_k$ is fixed and scales $h_k$ and the
fluctuation $g_k-h_k$ equally, leaving $\|h_k\|^2/V_k$ unchanged. The identity
describes the direction of each group's contribution for fixed weights;
contributions from different prompts may still point in different directions.

Thus density calibration equalizes, up to the factors $1\pm\delta$, the gradient
fluctuation of every reward whose weight cap does not bind, and the final batch normalization of DARA-Sym rescales all
reward contributions by one common positive factor, preserving their relative
magnitudes. 

\subsection{Active-Group Density under Heterogeneous Prompts and Fixed Budgets}
\label{app:heterogeneous_density}

Fix the policy and a binary reward $k$, and write $p_{i,k}$ for its success
probability on prompt $x_i$. Let $q_G(p)=1-p^G-(1-p)^G$.

\begin{proposition}[Prompt-specific activity]
\label{prop:theory_heterogeneous_activity}
If the $G$ outcomes for prompt $i$ are conditionally independent with success
probability $p_{i,k}$, then
\begin{equation}
 \mathbb E[\pi_k\mid x_1,\ldots,x_B]
 =\frac1B\sum_i q_G(p_{i,k}),\qquad
 \mathbb E[E_k\mid x_1,\ldots,x_B]
 =(G-1)\sum_i q_G(p_{i,k}).
 \label{eq:theory_heterogeneous_density}
\end{equation}
Moreover, for $G\geq2$,
\begin{equation}
 q_{G+1}(p)-q_G(p)=p^G(1-p)+(1-p)^Gp\geq0,
 \qquad
 \frac1B\sum_i q_G(p_{i,k})\leq q_G\!\left(\frac1B\sum_i p_{i,k}\right),
 \label{eq:theory_monotonic_jensen}
\end{equation}
with strict monotonicity whenever $0<p<1$.
\end{proposition}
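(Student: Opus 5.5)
The plan is to reduce everything to per-prompt computations. I would work in the idealized setting of Equation~\ref{eq:density_energy}: $\epsilon=\tau=0$ and zero advantages on constant groups. In that setting $I_{i,k}=1$ exactly when group $i$ contains both a success and a failure for reward $k$. Conditional on $x_1,\ldots,x_B$, the $G$ outcomes of group $i$ are i.i.d.\ Bernoulli$(p_{i,k})$. The group is therefore inactive exactly when all outcomes are $1$ (probability $p_{i,k}^G$) or all are $0$ (probability $(1-p_{i,k})^G$), and these two events are disjoint for $G\geq1$. Hence $\mathbb E[I_{i,k}\mid x_1,\ldots,x_B]=q_G(p_{i,k})$. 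Averaging over $i$ and using linearity in Equation~\ref{eq:group_activity} gives the first identity in Equation~\ref{eq:theory_heterogeneous_density}.

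For the energy, I would not recompute anything. The derivation in Appendix~\ref{app:proof_density_energy} shows that each active group contributes exactly $G-1$ and each inactive group contributes $0$, realization by realization. So $E_k=(G-1)\sum_i I_{i,k}$ holds pathwise, and taking conditional expectations gives $(G-1)\sum_i q_G(p_{i,k})$. This step only needs the identity to hold pointwise, not merely in expectation, and Equation~\ref{eq:app_group_energy} provides exactly that.

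For monotonicity in $G$, expand directly:
\[
q_{G+1}(p)-q_G(p)=p^G-p^{G+1}+(1-p)^G-(1-p)^{G+1}=p^G(1-p)+(1-p)^Gp .
\]
Both terms are nonnegative on $[0,1]$. When $0<p<1$, both are strictly positive, which gives strict monotonicity.

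For the Jensen inequality, the key claim is that $q_G$ is concave on $[0,1]$. Its second derivative is
\[
q_G''(p)=-G(G-1)\bigl(p^{G-2}+(1-p)^{G-2}\bigr),
\]
which is $\leq0$ for $G\geq2$. This is where the hypothesis $G\geq2$ enters; for $G=2$ the function is the quadratic $2p(1-p)$. Jensen's inequality for the uniform average over the $B$ prompts then yields $\frac1B\sum_i q_G(p_{i,k})\leq q_G\!\left(\frac1B\sum_i p_{i,k}\right)$.

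The only point needing care is the precise meaning of "active" under the idealization. With $\tau=0$ and $\epsilon=0$, a mixed group has $\sigma_{i,k}>0$ and nonzero advantages, so $I_{i,k}$ is exactly the mixed-group indicator. I would state that assumption explicitly at the start of the proof.
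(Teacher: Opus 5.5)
Your proposal is correct and matches the paper's proof step for step: the all-success/all-failure decomposition with linearity of expectation, the pathwise identity $E_k=B\pi_k(G-1)$ for the energy, direct subtraction for monotonicity in $G$, and concavity of $q_G$ via its second derivative followed by Jensen's inequality. Your explicit statement of the idealization ($\epsilon=\tau=0$, zero advantages on constant groups) is a useful clarification that the paper leaves implicit when it cites Equation~\ref{eq:density_energy}.
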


\begin{proof}
A group is inactive exactly when all outcomes succeed or all fail, with
probabilities $p_{i,k}^G$ and $(1-p_{i,k})^G$; linearity of expectation and
Equation~\ref{eq:density_energy} give
Equation~\ref{eq:theory_heterogeneous_density}. Subtraction gives the first
relation in Equation~\ref{eq:theory_monotonic_jensen}. Since
$q_G''(p)=-G(G-1)[p^{G-2}+(1-p)^{G-2}]\leq0$, $q_G$ is concave and Jensen's
inequality gives the second.
\end{proof}

Thus the group-size monotonicity holds prompt by prompt, without a common
success probability; substituting a pooled success rate overestimates density.
The monotonicity also holds without independence. For any joint distribution
of outcomes, the active probability is
$1-\Pr(r_1=\cdots=r_G=1)-\Pr(r_1=\cdots=r_G=0)$, and when groups of different
sizes are prefixes of one rollout process, a group that already contains both
outcomes remains active as responses are added, so the active probability is
nondecreasing in $G$.

\paragraph{Fixed response budget.}
The group-size study keeps the number of responses per rollout step fixed at
$N=BG$ (Table~\ref{tab:tool_calling_group_config}), so larger groups are
formed from fewer prompts. Let $\bar q_{k,G}=\mathbb E_X[q_G(p_k(X))]$ for
prompts drawn from a fixed distribution.

\begin{proposition}[Density and energy under a fixed budget]
\label{prop:theory_fixed_budget}
For group sizes $G\geq2$ dividing $N$,
\begin{equation}
 \mathbb E[\pi_k]=\bar q_{k,G},\qquad
 \frac{\mathbb E[E_k]}{N}=\left(1-\frac1G\right)\bar q_{k,G}.
 \label{eq:theory_fixed_budget}
\end{equation}
Both the expected active-group density and the expected advantage energy per
generated response are nondecreasing in $G$, and strictly increasing when
prompts with $0<p_k(X)<1$ have positive probability.
\end{proposition}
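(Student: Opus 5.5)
The plan is to reduce everything to Proposition~\ref{prop:theory_heterogeneous_activity}, first conditioning on the prompts and then averaging over the prompt distribution.

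With a fixed budget $N=BG$, the batch contains $B=N/G$ prompts drawn i.i.d.\ from the fixed distribution. Conditional on $x_1,\ldots,x_B$, Equation~\ref{eq:theory_heterogeneous_density} gives
\[
\mathbb E[\pi_k\mid x_{1:B}]=\frac1B\sum_i q_G(p_k(x_i))
\quad\text{and}\quad
\mathbb E[E_k\mid x_{1:B}]=(G-1)\sum_i q_G(p_k(x_i)).
\]
Each $x_i$ has the same marginal distribution, so the tower property yields
\[
\mathbb E[\pi_k]=\bar q_{k,G}
\quad\text{and}\quad
\mathbb E[E_k]=(G-1)B\,\bar q_{k,G}.
\]
Dividing by $N=BG$ gives $\mathbb E[E_k]/N=(1-1/G)\bar q_{k,G}$, which is Equation~\ref{eq:theory_fixed_budget}. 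Only identical marginals are needed for this step, so independence across prompts is not required. Since $q_G$ takes values in $[0,1]$, all the expectations are finite.

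Next comes monotonicity in $G$. Equation~\ref{eq:theory_monotonic_jensen} gives $q_{G+1}(p)\ge q_G(p)$ pointwise. By induction, $q_{G'}(p)\ge q_G(p)$ for every pair $G'\ge G\ge2$, including pairs that are not consecutive, which matters because only divisors of $N$ are compared. Taking expectations over $X$ shows that $\bar q_{k,G}$ is nondecreasing. The energy per response is the product $(1-1/G)\bar q_{k,G}$ of two nonnegative nondecreasing factors, so it is nondecreasing as well.

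For strictness, note that for $0<p<1$ the increment $p^G(1-p)+(1-p)^Gp$ is strictly positive. Consequently $q_{G'}(p)-q_G(p)>0$ on the event $\{0<p_k(X)<1\}$, and this event has positive probability by assumption. The expectation of a nonnegative function that is strictly positive on a set of positive measure is strictly positive, so $\bar q_{k,G'}>\bar q_{k,G}$. Strictness then transfers to the energy, since $(1-1/G')\bar q_{k,G'}>(1-1/G)\bar q_{k,G}$ whenever $G'>G$ and $\bar q_{k,G}\geq0$.

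I expect the only delicate point to be the strictness argument, since it needs the event $\{0<p_k(X)<1\}$ to have positive probability. The rest is a direct application of linearity of expectation together with the earlier proposition.
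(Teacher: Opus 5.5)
Your proposal is correct and follows essentially the same route as the paper. Both arguments count the expected active groups via Proposition~\ref{prop:theory_heterogeneous_activity}, multiply by the per-group energy $G-1$, substitute $B=N/G$, and inherit monotonicity and strictness from the pointwise behavior of $q_G$. Your induction over non-consecutive divisors of $N$ and your strictness argument on the positive-probability event $\{0<p_k(X)<1\}$ spell out what the paper compresses into ``averaging over prompts preserves both properties.''
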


\begin{proof}
The expected number of active groups is $B\bar q_{k,G}$ and each active group
contributes energy $G-1$; substituting $B=N/G$ gives
Equation~\ref{eq:theory_fixed_budget}. By
Proposition~\ref{prop:theory_heterogeneous_activity}, $q_G(p)$ is
nondecreasing in $G$ for every $p$ and strictly increasing for $0<p<1$, and
averaging over prompts preserves both properties. Since $1-1/G$ is strictly
increasing, the same holds for the energy per response.
\end{proof}

Hence, at a fixed response budget, larger groups raise both the active-group
density and the advantage energy per generated response. Each active group of
size $G$ contributes energy $G-1$, so allocating the budget to larger groups
supplies more advantage energy per response even though fewer prompts are
sampled. This is the density prediction tested in
Figure~\ref{fig:group_size_format_convergence}; the number of prompts per step
for each $G$ is listed in Table~\ref{tab:tool_calling_group_config}.

\paragraph{Density along training.}
The density--energy identity holds for every realized batch. Along training, success rates change, and under
conditionally independent sampling $q_G$ is largest at $p=1/2$, so density
naturally peaks while a reward is being acquired. DARA instead changes
how density is mapped to advantage coefficients on a fixed rollout batch,
leaving $I_{i,k}$ and $\pi_k$ unchanged, and varying $G$ changes the activity
probability through the sampling allocation. Both therefore act on the
density--energy relation directly.

\subsection{Energy of Positive-Only Amplification}
\label{app:asym_geometry}

DARA-Asym is not covered by the exact energy matching of DARA-Sym, but its
energy admits simple bounds. Let
$E_k^+=\sum_{i,j}[A_k^{(i,j)}]_+^2$ and $Q_k^+=\sum_{i,j}[A_k^{(i,j)}]_+$.

\begin{proposition}[Energy of DARA-Asym]
\label{prop:theory_asym_energy}
For $\widetilde{\mathbf A}_{k,\mathrm{asym}}=\mathbf A_k+(w_k-1)[\mathbf A_k]_+$
with $w_k\geq1$,
\begin{equation}
 \|\widetilde{\mathbf A}_{k,\mathrm{asym}}\|^2=E_k+(w_k^2-1)E_k^+,
 \label{eq:theory_asym_energy}
\end{equation}
and both this energy and its value after batch centering,
$E_k+(w_k^2-1)E_k^+-(w_k-1)^2(Q_k^+)^2/N$, lie in $[E_k,\,w_k^2E_k]$.
\end{proposition}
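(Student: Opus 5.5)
The plan is to compute everything entrywise and then reduce both bounds to two elementary inequalities: $0\le E_k^+\le E_k$, and a Cauchy--Schwarz bound on $Q_k^+$. Write $a=A_k^{(i,j)}$ for a single entry and $w=w_k\geq 1$.

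\textbf{Step 1 (uncentered energy).} If $a>0$, the calibrated entry is $a+(w-1)a=wa$, so its square is $a^2+(w^2-1)a^2$. If $a\le 0$, the entry is $a$ unchanged, so its square is $a^2=a^2+(w^2-1)[a]_+^2$. Summing over all $N=BG$ responses gives Equation~\ref{eq:theory_asym_energy}. Since $0\le [a]_+^2\le a^2$ entrywise, we have $0\le E_k^+\le E_k$. Together with $w^2-1\ge 0$, this places the uncentered energy in $[E_k,\,w^2E_k]$.

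\textbf{Step 2 (centered energy).} Batch centering of a vector $\mathbf v$ removes $(\mathbf 1^\top\mathbf v)^2/N$ from $\|\mathbf v\|^2$. Reward-wise normalization gives $\sum_j A_k^{(i,j)}=0$ in every group, as used for $P\mathbf A_k=\mathbf A_k$ in Section~\ref{app:energy_gradient}. Hence $\mathbf 1^\top\mathbf A_k=0$, and therefore $\mathbf 1^\top\widetilde{\mathbf A}_{k,\mathrm{asym}}=(w-1)Q_k^+$. This yields the stated centered value $E_k+(w^2-1)E_k^+-(w-1)^2(Q_k^+)^2/N$.

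The upper bound is immediate: centering can only reduce the energy, so the centered value is at most the uncentered one, which is at most $w^2E_k$ by Step 1.

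\textbf{Step 3 (lower bound for the centered energy).} This is the only step with content. It suffices to show
\[
(w-1)^2\,(Q_k^+)^2/N\;\le\;(w^2-1)\,E_k^+ .
\]
I will apply Cauchy--Schwarz to the vector $[\mathbf A_k]_+$ against $\mathbf 1$, which gives $(Q_k^+)^2\le N E_k^+$. Then, since $w\ge1$, we have $(w-1)^2\le (w-1)(w+1)=w^2-1$. Chaining these two inequalities gives the displayed bound, so the centered energy is at least $E_k$. The case $w=1$ is trivial, since then all correction terms vanish.

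I expect no real obstacle. The only points needing care are invoking the zero within-group sums to identify the batch mean, and noting that these bounds hold for the measured advantages without the idealized identity of Equation~\ref{eq:density_energy}.
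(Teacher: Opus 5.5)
Your proof is correct and follows essentially the same route as the paper: the disjoint supports of the positive and nonpositive entries give the uncentered identity, the within-group zero sums give the batch sum $(w_k-1)Q_k^+$, and the Cauchy--Schwarz bound $(Q_k^+)^2\le NE_k^+$ gives the centered lower bound. Your inequality $(w_k-1)^2\le w_k^2-1$ is the paper's regrouping $E_k+2(w_k-1)E_k^+ +(w_k-1)^2\bigl(E_k^+-(Q_k^+)^2/N\bigr)$ stated as an inequality, and you also make explicit the zero-sum fact behind the batch sum, which the paper uses without stating it.
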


\begin{proof}
Positive and negative entries have disjoint support, which gives
Equation~\ref{eq:theory_asym_energy}. The batch sum of the transformed
coefficients is $(w_k-1)Q_k^+$, so centering subtracts $(w_k-1)^2(Q_k^+)^2/N$.
The centered energy equals
$E_k+2(w_k-1)E_k^+ +(w_k-1)^2\bigl(E_k^+-(Q_k^+)^2/N\bigr)$, and the last
parenthesis is nonnegative by the Cauchy--Schwarz inequality, giving the lower
bound. Centering cannot increase the squared norm, and $E_k^+\leq E_k$ gives
the upper bound.
\end{proof}

Hence DARA-Asym raises the energy of a sparsely active reward by at most the
factor $w_k^2$ of DARA-Sym, and never below its GDPO energy, while
amplifying only the positive coefficients.

\section{Advantage Computation of Multi-Reward Methods}
\label{app:method_comparison}

\subsection{Comparison of Advantage Estimators}

Table~\ref{tab:method_comparison} summarizes how each method converts the
$K$ reward channels into the sequence-level advantage used in the clipped
policy objective. We use the notation of Section~\ref{sec:method}: for query
$i$ and response $j$, $A_k^{(i,j)}=(r_k^{(i,j)}-\mu_{i,k})/(\sigma_{i,k}+\epsilon)$
is the reward-wise group-normalized advantage, and
$\mathcal{N}_{\mathcal B}(S)=(S-\mu_{\mathcal B}(S))/(\sigma_{\mathcal B}(S)+\epsilon)$
denotes normalization over the rollout batch. The formulas for DVAO, SAW,
SA-MRPO, and GD$^2$PO-Hard follow the original papers
\citep{dvao,saw,samrpo,gd2po}; $w_k$ denotes prescribed objective weights where
a method uses them.

\begin{table}[htbp]
\caption{Advantage computation of multi-reward methods. The second column
gives the statistic that determines each reward's contribution, and the third
column gives the sequence-level advantage.}
\label{tab:method_comparison}
\vspace{4pt}
\centering
\small
\renewcommand{\arraystretch}{1.35}
\begin{tabularx}{\textwidth}{@{}l>{\raggedright\arraybackslash}p{0.27\textwidth}X@{}}
\toprule
Method & Reward weighting & Advantage \\
\midrule
GRPO
& Rewards summed before normalization
& $\displaystyle \frac{r_{\mathrm{sum}}^{(i,j)}-\mu_{i,\mathrm{sum}}}{\sigma_{i,\mathrm{sum}}+\epsilon},\quad r_{\mathrm{sum}}^{(i,j)}=\sum_k r_k^{(i,j)}$ \\
GDPO
& Equal weights on normalized rewards
& $\mathcal{N}_{\mathcal B}\bigl(\sum_k A_k^{(i,j)}\bigr)$ \\
DVAO
& Within-group reward standard deviation
& $\displaystyle \sum_k \tilde w_{i,k}A_k^{(i,j)},\quad \tilde w_{i,k}=\frac{w_k\sigma_{i,k}}{\sum_l w_l\sigma_{i,l}}$ \\
SAW
& Batch-level coefficient of variation $\mathrm{CV}_k$ of the offset reward $r_k-r_k^{\min}+\delta$
& $\displaystyle \mathcal{N}_{\mathcal B}\bigl(\sum_k \omega_k A_k^{(i,j)}\bigr),\quad \omega_k=K\,\frac{\mathrm{CV}_k}{\sum_l \mathrm{CV}_l}$ \\
SA-MRPO
& Batch-level saturation $s_k=\frac{\bar r_k-r_k^{\min}}{r_k^{\max}-r_k^{\min}}$
& $\mathcal{N}_{\mathcal B}\bigl(\sum_k \tilde w_k A_k^{(i,j)}\bigr),\quad \tilde w_k=w_k(1-s_k)^{\gamma}$ \\
GD$^2$PO-Hard
& Sign consistency of $\{A_k^{(i,j)}\}_k$
& $\mathcal{N}_{\mathcal B}\bigl(\sum_k A_k^{(i,j)}\bigr)$ over retained responses with
$\delta^{(i,j)}=\mathbf{1}[\text{no sign conflict among }A_k^{(i,j)}]$; the loss of query $i$ is scaled by $\hat\kappa_i=\frac{1}{G}\sum_j\delta^{(i,j)}$ \\
\midrule
DARA-Sym
& Active-group density $\pi_k$
& $\mathcal{N}_{\mathcal B}\bigl(\sum_k w_k A_k^{(i,j)}\bigr),\quad w_k=\min\bigl(w_{\max},\sqrt{\pi_{\mathrm{ref}}/\pi_k}\bigr)$ \\
DARA-Asym
& Active-group density $\pi_k$
& $\mathcal{N}_{\mathcal B}\bigl(\sum_k A_k^{(i,j)}+(w_k-1)[A_k^{(i,j)}]_+\bigr)$ \\
\bottomrule
\end{tabularx}
\end{table}

DVAO, SAW, and SA-MRPO adapt reward weights to the dispersion or attainment
level of each reward, and GD$^2$PO-Hard removes responses whose reward-wise
advantages conflict. DARA instead calibrates each reward by how often it
provides a nonzero comparison within the batch, which directly determines its
advantage energy under reward-wise normalization
(Equation~\ref{eq:density_energy}).

\subsection{Pseudocode}

Algorithm~\ref{alg:dara} gives the advantage computation of DARA for one rollout
batch with $K$ rewards. The resulting advantages replace the GDPO advantages;
the policy objective is unchanged.

\begin{algorithm}[H]
\caption{DARA advantage computation for one rollout batch}
\label{alg:dara}
\begin{algorithmic}[1]
\Require rewards $r_k^{(i,j)}$ for $i=1,\ldots,B$, $j=1,\ldots,G$, $k=1,\ldots,K$; cap $w_{\max}$; threshold $\tau$; variant $v\in\{\mathrm{sym},\mathrm{asym}\}$
\For{$k=1,\ldots,K$}
  \For{$i=1,\ldots,B$}
    \State $\mu_{i,k}\gets\frac{1}{G}\sum_{j}r_k^{(i,j)}$, \ $\sigma_{i,k}\gets\mathrm{std}_j\bigl(r_k^{(i,j)}\bigr)$
    \State $A_k^{(i,j)}\gets\bigl(r_k^{(i,j)}-\mu_{i,k}\bigr)/(\sigma_{i,k}+\epsilon)$ for all $j$
    \State $I_{i,k}\gets\mathbf{1}\bigl[\sum_{j}|A_k^{(i,j)}|>\tau\bigr]$
  \EndFor
  \State $\pi_k\gets\frac{1}{B}\sum_{i}I_{i,k}$ \Comment{active-group density}
\EndFor
\State $\pi_{\mathrm{ref}}\gets\max_k\pi_k$
\For{$k=1,\ldots,K$}
  \State $w_k\gets\min\bigl(w_{\max},\sqrt{\pi_{\mathrm{ref}}/\pi_k}\bigr)$ if $\pi_k>0$, otherwise $w_k\gets1$
\EndFor
\For{each response $(i,j)$}
  \If{$v=\mathrm{sym}$}
    \State $S^{(i,j)}\gets\sum_k w_kA_k^{(i,j)}$
  \Else
    \State $S^{(i,j)}\gets\sum_k\bigl(A_k^{(i,j)}+(w_k-1)\max(A_k^{(i,j)},0)\bigr)$
  \EndIf
\EndFor
\State $\widehat A^{(i,j)}\gets\bigl(S^{(i,j)}-\mu_{\mathcal B}(S)\bigr)/\bigl(\sigma_{\mathcal B}(S)+\epsilon\bigr)$ for all $(i,j)$
\State \Return $\widehat A^{(i,j)}$, assigned to every generated token of response $(i,j)$
\end{algorithmic}
\end{algorithm}

\section{Tool-Calling Experimental Details}
\label{app:tool_calling_details}

\subsection{Implementation and Optimization Details}

We implement all methods in the same verl-based training stack derived from
the released GDPO codebase. The model, rollout engine, optimizer settings, and
training budget are matched across methods. Table~\ref{tab:tool_calling_training_config}
lists the complete configuration used for the main tool-calling experiments.

\begin{table}[H]
    \caption{Training configuration for the tool-calling experiments.}
    \label{tab:tool_calling_training_config}
    \vspace{4pt}
    \centering
    \small
    \setlength{\tabcolsep}{5.5pt}
    \renewcommand{\arraystretch}{1.08}
    \begin{tabular}{@{}ll@{}}
        \toprule
        Setting & Value \\
        \midrule
        Backbones
        & Qwen2.5-1.5B-Instruct / Qwen2.5-3B-Instruct \\
        Training / rollout
        & FSDP / vLLM, tensor parallelism 1, bfloat16 \\
        Training / validation / checkpoint
        & 100 steps / every 10 steps / step 100 \\
        Prompts / responses per prompt / total
        & 512 / 4 / 2,048 \\
        PPO minibatch / microbatch / epochs
        & 512 responses / 256 responses / 1 \\
        Learning rate / gradient clip / PPO clip
        & $10^{-6}$ / 1.0 / 0.2 \\
        Entropy coefficient
        & 0.001 \\
        Maximum prompt / response length
        & 2,048 / 1,024 tokens \\
        Rollout sampling
        & temperature 1.0, top-$p$ 1.0, top-$k$ $-1$ \\
        Dynamic token budget / rollout memory
        & 16,384 tokens per GPU / 0.6 \\
        Gradient checkpointing / padding removal
        & True / True \\
        Training seeds
        & 5 (main); 3 (rollout-group and three-reward analyses) \\
        \bottomrule
    \end{tabular}
\end{table}

The actor parameter, gradient, and optimizer states remain on device, while
the reference parameters are offloaded. The 3B runs additionally enable
expandable CUDA allocator segments. Each method is validated with the same
checkpoints and decoding configuration; only its reward aggregation rule is
changed.

\subsection{BFCL-v4 Evaluation Protocol}

We evaluate each final checkpoint once on every BFCL-v4 case through the same
ToolRL prompting and response-parsing interface. The official BFCL checkers
score the generated function names and arguments for Live and Non-Live tasks
and the resulting tool-execution states and dialogue outcomes for Multi-Turn
tasks. The three subset scores are averaged equally to obtain the reported
Average. We compute all metrics independently for each training run and report
the mean across training seeds; the base model is evaluated once.

\begin{table}[H]
    \caption{Inference and evaluation configuration for BFCL-v4.}
    \label{tab:tool_calling_evaluation_config}
    \vspace{4pt}
    \centering
    \small
    \setlength{\tabcolsep}{5.5pt}
    \renewcommand{\arraystretch}{1.08}
    \begin{tabular}{@{}ll@{}}
        \toprule
        Setting & Value \\
        \midrule
        Evaluator
        & BFCL-v4, gorilla commit \texttt{6ea57973} \\
        Subsets
        & Live, Non-Live, and Multi-Turn \\
        Trajectories
        & One per evaluation case \\
        Temperature / top-$p$
        & 0.6 / 0.95 \\
        Maximum new tokens / context length
        & 8,192 / 32,768 \\
        Inference seed / top-$k$
        & 0 / $-1$ \\
        Inference engine
        & vLLM 0.11.0, tensor parallelism 1, bfloat16 \\
        Reported aggregation
        & Mean of Live, Non-Live, and Multi-Turn \\
        \bottomrule
    \end{tabular}
\end{table}

Format is evaluated on the complete assistant output. A response receives a
score of one when it follows an accepted sequence of \texttt{<think>},
\texttt{<tool\_call>}, and \texttt{<response>} blocks with the required order,
closing tags, and line breaks, and zero otherwise. When a case contains
multiple assistant outputs, their format scores are averaged within the case
before applying the same category and subset aggregation used for accuracy.
Average Format is the arithmetic mean of the Live, Non-Live, and Multi-Turn
format scores.

\subsection{Additional Analysis Configurations}

For the rollout-group analysis, we vary $G$ while keeping 2,048 responses and
four optimizer updates per training step. The prompt batch and PPO batch sizes
are adjusted together as shown in Table~\ref{tab:tool_calling_group_config}; all
other settings are identical to the main experiments.

\begin{table}[H]
    \caption{Batch configuration for the rollout-group analysis.}
    \label{tab:tool_calling_group_config}
    \vspace{4pt}
    \centering
    \small
    \setlength{\tabcolsep}{5.5pt}
    \renewcommand{\arraystretch}{1.08}
    \begin{tabular}{@{}rrrrr@{}}
        \toprule
        $G$ & Prompts & PPO mini & PPO micro & Responses \\
        \midrule
        4  & 512 & 128 & 64 & 2,048 \\
        8  & 256 & 64  & 32 & 2,048 \\
        16 & 128 & 32  & 16 & 2,048 \\
        32 & 64  & 16  & 8  & 2,048 \\
        \bottomrule
    \end{tabular}
\end{table}

The three-reward experiments add a length reward to correctness and format.
Let $n$ be the number of whitespace-delimited words inside the extracted
\texttt{<think>} segment. The reward is
$r_{\mathrm{length}}=\mathbb{I}[n\leq 16]$, with a score of zero when
the required thinking tags are absent. These runs use
$G=4$ and otherwise retain the main configuration. On BFCL-v4, Len.\ is the
length reward averaged with the same output-to-case and subset aggregation as
Format.

\section{Mathematical Reasoning Experimental Details}

\label{app:math_details}

\subsection{Implementation and Optimization Details}

We implement all mathematical-reasoning experiments in the same verl-based
training stack. The actor is optimized with FSDP, while responses are generated
with vLLM. Our training set is the full DeepScaleR-Preview collection. Each example is
represented as a single user turn containing the original problem followed by
``Please reason step by step, and put your final answer within
\texttt{\textbackslash boxed\{\}}.'' Prompts exceeding 1,024
tokens after applying the backbone-specific chat template are removed. A
seeded 200-example monitoring set is sampled from the training collection
without removing those examples from training; it is used only for online
validation and is not used as an independent test benchmark. Table~\ref{tab:math_training_config} lists the shared training configuration.
The 1.5B and 7B experiments use DeepSeek-R1-1.5B and
DeepSeek-R1-7B, the 4B experiments use Qwen3-4B-Instruct,
and the thinking-mode comparison uses Qwen3-4B-Thinking. Unless explicitly stated otherwise, all
method comparisons at a given scale use the same backbone, data, random seed,
training budget, and throughput configuration. All reported results use
checkpoints from the first 100 updates.

\begin{table}[htbp]

\caption{Training configuration for the mathematical-reasoning experiments.}

\label{tab:math_training_config}

\vspace{4pt}

\centering

\small

\setlength{\tabcolsep}{5.5pt}

\renewcommand{\arraystretch}{1.08}

\begin{tabular}{@{}ll@{}}

    \toprule

    Setting & Value \\

    \midrule

    Backbones

    & DeepSeek-R1-Distill-Qwen-1.5B / 7B \\

    & Qwen3-4B-Instruct-2507 / Qwen3-4B-Thinking-2507 \\

    Training data

    & DeepScaleR-Preview, full training collection \\

    Training / rollout

    & FSDP / vLLM, rollout tensor parallelism 1 \\

    Prompt / response length

    & 1,024 / 8,000 tokens \\

    Accepted prompt groups / candidate groups

    & 512 / 768 per generation batch \\

    Responses per prompt, $G$

    & 16 \\

    PPO minibatch / epochs

    & 64 / 1 \\

    Learning rate / gradient clip

    & $10^{-6}$ / 1.0 \\

    PPO clipping, lower / upper

    & 0.20 / 0.28 \\

    KL loss coefficient / type

    & 0.0005 / MSE \\

    Entropy coefficient

    & 0 \\

    Loss aggregation

    & Token mean \\

    Rollout sampling

    & temperature 1.0, top-$p$ 1.0 \\

    Dynamic actor token budget

    & 16,384 tokens per GPU \\

    vLLM batched-token limit

    & 32,768 \\

    Gradient checkpointing / padding removal

    & True / True \\

    Actor precision

    & FP32 master parameters; BF16 forward/backward; FP32 reduction \\

    Reference precision / parameter offload

    & BF16 / True \\

    \bottomrule

\end{tabular}

\end{table}

The actor retains FP32 master parameters while using BF16 parameters for
forward and backward computation; gradient reductions are performed in FP32.
The reference model uses BF16 parameters and parameter offloading. Actor
checkpoints contain model, optimizer, auxiliary trainer state, and a Hugging
Face model export, permitting exact continuation of interrupted runs. The
rollout memory fraction is 0.9 for the 1.5B configuration and 0.8 for the 4B
configuration; the corresponding maximum numbers of concurrent sequences are
256 and 160. We use dynamic group filtering to avoid optimizer updates from prompt groups
that carry no within-group learning signal. Candidate prompt groups are drawn
in batches of 768, each with 16 responses. A group is retained when its summed
sequence reward is not constant across all responses. Sampling continues until
512 informative groups are collected or the generation-batch cap is reached.
The cap is two batches for the original 1.5B configuration and four batches for
the 4B configuration and resumed runs. All compared methods use the same
retained groups within a run configuration.

\subsection{Reward Construction and Advantage Estimators}

Training uses two binary reward channels. Let $y$ be the reference answer,
$\hat{y}$ the generated response, and $L$ the number of valid response tokens.
The correctness reward is

\begin{equation}
    r_{\mathrm{correct}} =
    \mathbb{I}\left[\operatorname{MathGrader}(\hat{y},y)=1\right],
\end{equation}

where \textsc{MathGrader} is the same boxed-answer equivalence grader used by
the verl Hendrycks-MATH reward implementation. The length reward is

\begin{equation}
    r_{\mathrm{length}} = \mathbb{I}[L\leq 4000].
\end{equation}

The scalar reward exposed to estimators that do not consume separate channels
is $r_{\mathrm{score}}=r_{\mathrm{correct}}+r_{\mathrm{length}}$. Token lengths
are taken directly from the valid response mask rather than estimated from the
decoded string, so the 4,000-token boundary is identical in reward computation
and evaluation.

For completeness, we describe the advantage estimators used in the main
comparisons. For prompt group $i$, response $g$, and reward channel $k$, let
$r_{ig}^{(k)}$ denote the channel reward and define its group-normalized
advantage as

\begin{equation}
    a_{ig}^{(k)} =
    \frac{r_{ig}^{(k)}-\mu_i^{(k)}}{\sigma_i^{(k)}+\epsilon},
\end{equation}

where $\mu_i^{(k)}$ and $\sigma_i^{(k)}$ are computed over the $G=16$
responses to prompt $i$. GRPO applies this normalization once to the fused
scalar reward. GDPO instead normalizes correctness and length separately,
sums the two channel advantages with equal coefficients, and applies a final
masked whitening operation over the training batch. DARA estimates how frequently each reward channel is informative. Let
$\pi_k$ be the fraction of prompt groups for which channel $k$ has nonzero
advantage energy, using a numerical activity threshold $\tau=10^{-8}$. With
$\pi_{\mathrm{ref}}=\max_k\pi_k$, the channel weight is

\begin{equation}
    w_k =
    \begin{cases}
        \min\left(w_{\max}, \sqrt{\pi_{\mathrm{ref}}/\pi_k}\right), & \pi_k>0,\\
        1, & \pi_k=0,
    \end{cases}
    \qquad w_{\max}=5.
\end{equation}

DARA-Asymmetric applies this weight only to positive advantages,

\begin{equation}
    \tilde{a}_{ig} = \sum_k
    \left(w_k[a_{ig}^{(k)}]_+ + [a_{ig}^{(k)}]_-\right),
\end{equation}
where $[a]_+=\max(a,0)$ and $[a]_-=\min(a,0)$,

whereas DARA-Symmetric applies the same weight to both signs,

\begin{equation}
    \tilde{a}_{ig}^{\mathrm{sym}} = \sum_k w_k a_{ig}^{(k)}.
\end{equation}

In both cases, the combined response-level values are normalized across the
batch using the sample standard deviation and are then broadcast to all valid
tokens of the corresponding response. Thus, the asymmetric and symmetric variants
differ only in whether density correction scales negative advantages. The
Static-Asym and Static-Sym controls retain the corresponding DARA pipeline but
fix the correctness and length weights to 1 and 5, respectively.

\subsection{Mathematical-Reasoning Evaluation Protocol}

We evaluate checkpoints on five public mathematical-reasoning benchmarks:
MATH-500, AIME 2024, AMC 2022/2023, Minerva Math, and the English competition
subset of OlympiadBench. Table~\ref{tab:math_evaluation_config} gives the exact
data sources and inference configuration. MATH-500 is used as the MATH
component. The complete suite contains 1,559 questions and therefore 24,944
generations per checkpoint and evaluation seed. The evaluation prompt exactly follows the training prompt: the original problem
and boxed-answer instruction are placed in one user turn, no system message is
added, and the checkpoint tokenizer supplies the generation prompt through its
native chat template. Ground-truth rendering is normalized without modifying
the grader: integral floating-point AMC answers are converted to integer
strings, the first answer is selected when OlympiadBench supplies a list, and
outer LaTeX math delimiters are removed from OlympiadBench answers.

\begin{table}[H]

\caption{Inference and evaluation configuration for mathematical reasoning.}

\label{tab:math_evaluation_config}

\vspace{4pt}

\centering

\small

\setlength{\tabcolsep}{5.5pt}

\renewcommand{\arraystretch}{1.08}

\begin{tabular}{@{}ll@{}}

    \toprule

    Setting & Value \\

    \midrule

    MATH

    & MATH-500, test split, 500 questions \\

    AIME

    & AIME 2024, 30 questions \\

    AMC

    & AMC 2022/2023, 83 questions \\

    Minerva

    & Minerva Math, test split, 272 questions \\

    OlympiadBench

    & \texttt{OE\_TO\_maths\_en\_COMP}, 674 questions \\

    Samples per question

    & 16 \\

    Temperature / top-$p$

    & 0.6 / 0.95 \\

    Maximum new tokens / context length

    & 32,768 / 34,816 \\

    Inference engine

    & vLLM, tensor parallelism 2, bfloat16 \\

    vLLM memory fraction

    & 0.85 \\

    Evaluated checkpoints

    & Steps $10,20,\ldots,100$ \\

    Generations per checkpoint and seed

    & 24,944 \\

    Exceed threshold

    & More than 4,000 returned tokens \\

    Bootstrap confidence intervals

    & 2,000 question-level resamples \\

    Reported benchmark aggregation

    & Unweighted mean over the five benchmarks \\

    \bottomrule

\end{tabular}

\end{table}

For the checkpoint-wise early-training analysis, we evaluate steps
$10,20,\ldots,100$.

\subsection{Metrics and Aggregation}

For a benchmark with $Q$ questions and $G=16$ sampled responses per question,
strict accuracy is

\begin{equation}
    \operatorname{Acc}_{\mathrm{strict}} =
    \frac{1}{Q}\sum_{q=1}^{Q}\frac{1}{G}
    \sum_{g=1}^{G}\mathbb{I}[\operatorname{MathGrader}(\hat{y}_{qg},y_q)=1].
\end{equation}

This is the equal-question-weighted mean sample accuracy, not the probability
that at least one of 16 responses is correct. Our primary accuracy
uses a conservative numerical-equivalence fallback:

\begin{equation}
    \operatorname{correct}_{qg} =
    \operatorname{strict}_{qg}\ \lor
    \operatorname{NumEq}(\hat{y}_{qg},y_q;\operatorname{rtol}=0).
\end{equation}

The fallback is invoked only when the strict grader returns zero and therefore
can only rescue, never reverse, a strict match. It handles exact numerical
rendering differences but does not grant equivalence to general symbolic,
interval, radical, or multi-answer expressions. Strict and fallback-adjusted
accuracies are both retained in the raw summaries.

Exceed measures the fraction of all generated responses whose vLLM-returned
token count is strictly greater than 4,000:

\begin{equation}
    \operatorname{Exceed} = \frac{1}{QG}
    \sum_{q=1}^{Q}\sum_{g=1}^{G}\mathbb{I}[L_{qg}>4000].
\end{equation}

We additionally report Joint Success,

\begin{equation}
    \operatorname{Joint} = \frac{1}{QG}
    \sum_{q=1}^{Q}\sum_{g=1}^{G}
    \mathbb{I}[\operatorname{correct}_{qg}=1\ \land\ L_{qg}\leq4000],
\end{equation}

which requires both correctness and satisfaction of the training-time length
criterion. Responses truncated at the 32,768-token generation limit remain in
the denominator, receive no special exclusion, and are scored from their
returned text. For each benchmark and evaluation seed, 95\% confidence
intervals are obtained from 2,000 nonparametric resamples of questions. The
reported Macro Accuracy, Macro Exceed, and Macro Joint Success are unweighted
arithmetic means.

\end{document}